\documentclass[lettersize,journal]{IEEEtran}
\usepackage{amsmath,amsfonts}
\usepackage{algorithmic}
\usepackage{algorithm}
\usepackage{array}
\usepackage[caption=false,font=normalsize,labelfont=sf,textfont=sf]{subfig}
\usepackage{textcomp}
\usepackage{stfloats}
\usepackage{url}
\usepackage{verbatim}
\usepackage{graphicx}
\usepackage{cite}

\usepackage{mathrsfs}
\usepackage{tabularx}
\usepackage{diagbox}
\usepackage{epstopdf}
\usepackage{pdfpages}
\usepackage{multirow}
\usepackage{amsthm}
\theoremstyle{remark}

\theoremstyle{plain}
\newtheorem{theorem}{Theorem}
\newtheorem{proposition}{Proposition}

\newtheorem{assumption}{Assumption}

\usepackage{amssymb}
\usepackage{placeins}
\usepackage{glossaries}

\newacronym{LSTM}{LSTM}{long short-term memory}

\newacronym{MLP}{MLP}{multi-layer perceptron}

\newacronym{IR}{IR}{integrity risk}

\newacronym{PL}{PL}{protection level}

\newacronym{PE}{PE}{position error}

\newacronym{AL}{AL}{alert limit}

\newacronym{GNSS}{GNSS}{global navigation satellite systems}

\newacronym{LOS}{LOS}{line-of-sight}

\newacronym{NLOS}{NLOS}{non-line-of-sight}

\newacronym{QL}{QL}{quantile loss}

\newacronym{HQL}{HQL}{Huber quantile loss}

\newacronym{CDF}{CDF}{cumulative distribution function}

\newacronym{PDF}{PDF}{probability density function}

\newacronym{CN0}{CN0}{carrier-to-noise-density ratio}

\newacronym{CP}{CP}{conformal prediction}

\newacronym{CQR}{CQR}{conformalized quantile regression}

\newacronym{QR}{QR}{quantile regression}

\newacronym{PRN}{PRN}{pseudorandom noise}

\newacronym{MPN}{MPN}{multipath errors and noise}

\newacronym{CMC}{CMC}{code-minus-carrier}

\newacronym{RNN}{RNN}{recurrent neural network}

\newacronym{CNN}{CNN}{convolutional neural network}

\newacronym{GNN}{GNN}{graph neural network}

\newacronym{EMD}{EMD}{Earth Mover’s Distance}

\newacronym{QQ}{QQ}{quantile-quantile}

\newacronym{ZWD}{ZWD}{Zenith Wet Delay}

\newacronym{OOD}{OOD}{out-of-distribution}

\newacronym{SBAS}{SBAS}{Satellite-Based Augmentation System}

\newacronym{GBAS}{GBAS}{Ground-Based Augmentation System}

\newacronym{ARAIM}{ARAIM}{Advanced Receiver Autonomous Integrity Monitoring}

\newacronym{GMM}{GMM}{Gaussian Mixture Model}

\newacronym{GPR}{GPR}{Gaussian Process Regression}

\newacronym{CDDIS}{CDDIS}{Crustal Dynamics Data Information System}

\newacronym{IGS}{IGS}{International GNSS Service}

\newacronym{PPP}{PPP}{Precise Point Positioning}

\newacronym{GIVE}{GIVE}{Grid Ionospheric Vertical Error}

\newacronym{Rfit}{Rfit}{fit radius}

\newacronym{RCM}{RCM}{relative centroid metric}

\newacronym{TEC}{TEC}{Total Electron Content}

\newacronym{UAV}{UAV}{unmanned aerial vehicle}

\newacronym{AD}{AD}{autonomous driving}

\newacronym{UAM}{UAM}{urban air mobility}

\newacronym{UQ}{UQ}{uncertainty quantification}

\graphicspath{{images/}}

\begin{document}

\title{Learning Context-conditioned Gaussian Overbounds for Convolution-Based Uncertainty Propagation}

\author{Ruirui Liu, Xuejie Hou, Yiping Jiang, Hui Ren
\thanks{}%
\thanks{Ruirui Liu, Xuejie Hou, Yiping Jiang, and Hui Ren are with the Department of Aeronautical and Aviation Engineering, The Hong Kong Polytechnic University, Hong Kong, China (email: ruirui.liu@connect.polyu.hk; yiping.jiang@polyu.edu.hk).}}

\markboth{}%
{Liu \MakeLowercase{\textit{et al.}}: Learning Context-conditioned Gaussian Overbounds for Convolution-Based Uncertainty Propagation}

\maketitle

\begin{abstract}
Uncertainty quantification is essential in safety-critical settings—from autonomous driving to aviation, finance, and health—where decisions must rely on conservative bounds rather than point estimates. Predictor-level intervals (e.g., from quantile regression, conformal prediction, variance networks, or Bayesian models) generally do not compose: adding two per-variable intervals need not yield a valid interval for their sum or preserve coverage. In aviation, Gaussian overbounding replaces complex error distributions with a conservative Gaussian whose tails dominate the truth, so conservatism propagates through linear operations. Yet classical overbounds are global, often overly conservative, and hard to adapt to feature-conditioned errors. We propose a unified learning framework that trains neural networks to produce context-aware Gaussian overbounds—mean and scale—with provable conservatism on a finite quantile grid and, under three explicit regularity assumptions, continuous-tail conservatism on a certified interval. Our overbounding loss enforces conservativeness at selected quantiles while penalizing distributional distance with a Wasserstein-style term. The learned bounds support conservative linear-combination and convolution analysis on the enforced grid, and on the certified interval when assumptions hold, while being less redundant than traditional methods. We provide a scoped analysis of discrete-to-continuous conservatism and compact-domain objective regularity, and validate on synthetic data and real-world datasets, including multipath, ionospheric, and tropospheric residual errors. Across these settings, the method yields tighter bounds while maintaining conservatism on the enforced grid and in experiments. The framework is modality-agnostic and applicable to learning systems that require conservative, feature-conditioned uncertainty estimates in dynamic environments.
\end{abstract}

\begin{IEEEkeywords}
Uncertainty quantification, conservative learning, quantile regression, Wasserstein distance, variance network, Gaussian overbounding, integrity monitoring.
\end{IEEEkeywords}

\section{Introduction}\label{sec1}
\IEEEPARstart{U}{ncertainty} quantification (UQ) is central to high-stakes applications, including autonomous systems, medical decision support, and safety monitoring. In these applications, overconfident yet wrong predictions can be catastrophic, and reliable uncertainty estimates become as important as the mean prediction itself \cite{he2025uqsurvey, dong2025survey}.
A rich set of \acrshort{UQ} methods has emerged, including Bayesian models, deep ensembles, heteroscedastic (variance) regression, \gls{QR}, and distribution-free \gls{CP} \cite{gal2016dropout,lu2021hierarchical, kendall2017uncertainties,lakshminarayanan2017simple,koenker1978regression, angelopoulos2021gentle}. These methods typically target calibration (valid coverage or well-calibrated predictive probabilities) while seeking tightness (tight intervals/distributions), a trade-off central to probabilistic forecasting \cite{gneiting2007probabilistic}.

Despite this progress, safety-critical error propagation introduces an additional requirement that is largely absent from mainstream \gls{UQ}: compositional conservatism under addition. Many systems assess risk through sums of multiple stochastic error sources arising from different modules or sensors. If the total error is a sum of components, then downstream safety metrics depend on the distribution of that sum, obtained by convolution. Unfortunately, prediction intervals produced by common \gls{UQ} pipelines are not generally closed under convolution without strong assumptions: even if each component admits a marginal $(1-\alpha)$ interval, summing these intervals does not yield a principled distributional bound for the aggregate error. A conservative workaround is Bonferroni-style splitting (e.g., $1-\alpha/n$ per component), which ensures a valid interval for the sum at level $1-\alpha$ via coarse union bounds, but can be excessively conservative and typically larger than bounds obtained from distributional convolution, especially when $n$ is large, or tails are heavy (Section~\ref{sec4_1}). Moreover, standard \gls{CP} provides finite-sample, distribution-free guarantees only for marginal coverage under exchangeability \cite{angelopoulos2021gentle,zhou2024conformalSurvey}. In contrast, conditional coverage at a given context $x$ would typically yield more adaptive (potentially tighter) intervals, but exact distribution-free conditional coverage uniformly over all $x$ is impossible without additional assumptions \cite{lei2014distributionfree,barber2021limits}. \Gls{CQR} combines \gls{QR} with \gls{CP} to better accommodate heteroscedasticity and often produces tighter intervals in practice\cite{romano2019conformalized}; nevertheless, its formal guarantee remains marginal, so the resulting intervals can still be overly conservative—especially when applied repeatedly or propagated across multiple additive components.

In contrast, Gaussian overbounding---a cornerstone of \gls{GNSS} aviation integrity monitoring---was developed precisely to enable conservative risk propagation through convolution. Gaussian overbounding replaces an empirical error distribution with a simpler distribution (often Gaussian) that is conservative in the relevant tails \cite{decleene2000defining,rife2012overbounding}. This conservatism can be transferred through summation, thereby supporting rigorous \gls{PL}(i.e., aviation-style upper bounds on error) computation. However, classical overbounding techniques (symmetric overbounding, paired overbounding, and two-step Gaussian overbounding) have two major limitations in modern data-driven settings \cite{shively2000overbound,rife2004paired,blanch2018gaussian}: (i) they often lack an explicit tightness objective, leading to excessive conservatism; and (ii) they typically assume access to the full (unconditional) error distribution, which is impractical for conditional error distributions driven by continuous context features (e.g., geometry and environment for multipath). Recent learning-based attempts mitigate the second limitation by predicting quantiles or conditional distributions using neural networks and then applying classical overbounding as a post-processing step \cite{no2021machine,liu2024overbounding,rossl2024robust}. However, this two-stage design compounds conservatism: modeling uncertainty and bounding uncertainty are optimized separately, and the post-hoc bounding step reintroduces the lack of tightness criteria.

Motivated by these gaps in safety-critical uncertainty propagation, this paper proposes a unified learning-based framework that directly trains a context-conditioned Gaussian overbound. The key idea is to embed classical overbounding constraints into the learning objective, rather than treating overbounding as a post-hoc step applied to a learned distribution or interval. Concretely, we (i) estimate multiple conditional quantiles via a multi-quantile regression objective; (ii) enforce relaxed paired-overbounding constraints (with excess mass) in quantile space to guarantee one-sided conservatism on a finite quantile grid; and (iii) introduce a Wasserstein-distance-inspired penalty to guide the solution toward a relatively tight overbound. The resulting model outputs Gaussian parameters $(\mu(x),\sigma(x))$ with finite-grid conservatism guarantees and, under three explicit regularity assumptions, continuous-tail conservatism on a certified interval suitable for convolution-based propagation, achieving a principled conservatism--tightness trade-off that bridges modern learning-based \gls{UQ} and classical integrity theory. In addition, we adopt a small ensemble across random seeds and select tail-wise conservative bounds at inference time as a heuristic safeguard. 

Our main contributions are:
\begin{itemize}
    \item This paper bridges classical Gaussian overbounding in \gls{GNSS} integrity monitoring with modern \gls{UQ} by learning context-conditioned parametric uncertainty (mean and variance) in a form that supports convolution-based risk propagation.
    \item This paper formulates conditional Gaussian overbounding as a constrained learning problem and proposes a single-stage overbounding loss that enforces conservatism in quantile space on a finite grid with an excess-mass relaxation.
    \item This paper introduces a Wasserstein-distance penalty and monotonicity regularization to explicitly trade off tightness and conservatism, yielding empirically tighter bounds than classical overbounding and prior two-stage learning-based baselines.
    \item This paper provides a scoped theoretical analysis covering discrete-to-continuous conservatism on a certified interval and local regularity of the training objective on compact domains.
    \item This paper validates the approach on synthetic mixtures and three real-world scenarios (troposphere residuals, urban multipath, ionosphere residuals), demonstrating substantial reductions in \gls{PL} while maintaining conservatism on the enforced grid and in empirical evaluation.
\end{itemize}

The remainder of this paper is structured as follows: Section~\ref{sec2} reviews background on uncertainty quantification and classical Gaussian overbounding. Section~\ref{sec3} derives the proposed overbounding loss, while the appendices provide supplementary theoretical clarification. Section~\ref{sec4} describes simulation and experimental settings. Section~\ref{sec5} presents results, and Section~\ref{sec6} concludes.

\section{Background and Related Work}\label{sec2}

This section situates the proposed framework at the intersection of modern \gls{UQ} and classical integrity overbounding. We first review representative \gls{UQ} paradigms for regression. We then summarize Gaussian overbounding methods and recent learning-based extensions.

\subsection{Uncertainty Quantification for Regression}
\Gls{UQ} for deep learning has been extensively studied, spanning Bayesian treatments, ensembles, variance networks, \gls{QR}, and distribution-free \gls{CP}; recent surveys provide comprehensive overviews of these directions \cite{he2025uqsurvey}.

\subsubsection{Bayesian methods and deep ensembles}
Bayesian deep learning represents uncertainty by placing a posterior distribution over model parameters and marginalizing predictions accordingly. Practical approximations include Monte Carlo dropout \cite{gal2016dropout} and explicit discussions of aleatoric versus epistemic uncertainty \cite{kendall2017uncertainties}. Deep ensembles train multiple independently initialized models and use predictive diversity as a scalable proxy for epistemic uncertainty, often yielding strong performance and calibration in practice \cite{lakshminarayanan2017simple,rahaman2021uncertainty}. Related probabilistic modeling tools, such as hierarchical Bayesian constructions for prediction intervals \cite{lu2021hierarchical} or Gaussian-process-style approaches \cite{gomez2023adaptive}, can also produce uncertainty estimates with appealing empirical properties. Nevertheless, these methods typically optimize average predictive performance (e.g., likelihood) and do not guarantee that extreme tails are never underestimated---precisely the failure mode targeted by integrity monitoring. Moreover, even if a full predictive distribution is produced for each individual error term, propagating uncertainty across multiple sources typically requires repeated convolutions. For arbitrary distributions, these convolutions are computationally expensive and often impractical at scale, especially in real-time or large-system settings.

\subsubsection{Variance networks and parametric predictive distributions}
A complementary line of work models uncertainty by predicting parameters of a parametric likelihood, e.g., a Gaussian with input-dependent variance (heteroscedastic regression). Such models are commonly trained using strictly proper scoring rules, most notably the negative log-likelihood \cite{gneiting2007probabilistic}. Variance networks are attractive for downstream propagation because they output closed-form distributions, and substantial effort has been devoted to making these models reliable and well-behaved in training \cite{skafte2019reliable,ma2023improving}, as well as improving distributional agreement and calibration beyond pure likelihood fitting \cite{cui2020calibrated}. However, likelihood-trained parametric models are not guaranteed to be conservative in the tails unless explicit tail constraints are imposed. Moreover, their usefulness depends on the assumed likelihood family (e.g., Gaussian), which may be misspecified for biased, skewed, or multimodal errors. This motivates learning objectives that incorporate safety constraints directly, rather than relying solely on probabilistic fit or calibration.

\subsubsection{Quantile regression}
\Gls{QR} estimates conditional quantiles by minimizing the pinball loss \cite{koenker1978regression}, providing a flexible nonparametric representation of uncertainty in quantile space. Modern deep-learning variants learn multiple quantiles jointly to form prediction bands or approximate conditional distributions \cite{tagasovska2019single}, and recent work has studied refinements beyond the pinball loss to improve calibration and robustness \cite{chung2021beyond}. Quantile formulations are appealing for safety settings because tail requirements can be expressed naturally at selected quantile levels. However, for integrity monitoring, a key limitation is that quantiles do not compose under convolution: knowing marginal conditional quantiles of each error component is insufficient to obtain tight, conservative quantiles of their sum without additional assumptions on dependence or full distributional shape.

\subsubsection{Conformal prediction and distribution-free intervals}
\Gls{CP} provides finite-sample, distribution-free guarantees for prediction sets under exchangeability \cite{shafer2008tutorial,lei2014distributionfree,angelopoulos2021gentle,zhou2024conformalSurvey}. \Gls{CQR} combines \gls{QR} with \gls{CP} to achieve marginal coverage guarantees with heteroscedasticity adaptation \cite{romano2019conformalized}, and extensions combining ensembles with \gls{CQR} have also been explored in time-series contexts \cite{jensen2022ensemble}. Recent variants further study loss-controlled or task-specific conformal mechanisms for regulating uncertainty \cite{wang2024conformal,wang2025satellite}. Despite their rigor for single-target uncertainty, \gls{CP} guarantees are typically marginal rather than conditional, and achieving nontrivial distribution-free conditional coverage is fundamentally limited without additional assumptions \cite{barber2021limits}. In high-reliability regimes (very small miscoverage), \gls{CP}-style intervals can become overly wide, reducing availability. Additionally, \gls{CP} are not designed for convolution-transitive propagation across multiple error sources; a common workaround is to allocate miscoverage as $\alpha/n$ per source and apply a union bound, yielding valid but often overly conservative bounds for sums (see Section~\ref{sec4_1}).

\subsection{Gaussian Overbounding and Uncertainty Propagation}

In safety-critical navigation, the objective is not merely calibrated intervals for a single prediction, but conservative distributional bounds that remain conservative under the addition of multiple independent error sources. Gaussian overbounding addresses this by replacing an empirical error distribution with an analytically tractable Gaussian distribution that conservatively dominates the relevant tails. The overbounding condition is commonly stated in terms of \gls{CDF} overbounding \cite{decleene2000defining, rife2007symmetric}:
\begin{align}
    F_a(x) &\geq F_{oa}(x), \qquad \forall F_a(x)\geq \frac{1}{2} \label{eq1}\\
    F_a(x) &\leq F_{oa}(x), \qquad \forall F_a(x)\leq \frac{1}{2} \label{eq2}
\end{align}
where $F_a(x)$ is the \gls{CDF} of the actual distribution for an arbitrary variable $a$, and $F_{oa}(x)$ denotes the \gls{CDF} of its corresponding overbounding distribution.

A key requirement is transitive conservativeness: if each component is overbounded, then the sum should also be overbounded after convolution. Concretely, if $a$ is overbounded by $oa$, then for an independent error $b$ the distribution of $(oa+b)$ should conservatively bound that of $(a+b)$:
\begin{align}
    F_{a+b}(x) &\geq F_{oa+b}(x), \quad\forall F_{a+b}(x)\geq \frac{1}{2} \label{eq3}\\
    F_{a+b}(x) &\leq F_{oa+b}(x), \quad\forall F_{a+b}(x)\leq \frac{1}{2} \label{eq4}
\end{align}
And the \gls{CDF} of sum of $(a+b)$ can be expressed as:
\begin{equation}
    F_{a+b}(x) = \int_{-\infty}^x (f_a * f_b)(t) dt \label{eq5}
\end{equation}
where $f_a * f_b$ represents the convolution of \gls{PDF} $f_a$ and $f_b$. This closure under convolution enables rigorous computation of \gls{PL} in integrity monitoring.

Based on this overbounding requirement, Rife \cite{rife2007symmetric} finds that if $a$ and $b$, and $oa$ are symmetric and unimodal distributions, the conservativeness transitivity can be maintained, a technique known as symmetric overbounding. However, symmetric assumptions are often unrealistic for biased or multimodal distributions like urban multipath errors \cite{liu2024overbounding}.

Paired overbounding, introduced by Rife \cite{rife2004paired}, relaxes symmetry constraints by bounding the actual distribution with two Gaussian distributions—left and right bounds—expressed as:
\begin{align}
F_a(x) &\le F_L(x), \qquad \forall\, x \label{eq6}\\
F_a(x) &\ge F_R(x), \qquad \forall\, x \label{eq7}
\end{align}
where $F_{L}$ and $F_{R}$ are the \gls{CDF} left and right bounds of the actual distribution. The overbounding \gls{CDF} is then constructed from these left and right bounds:
\begin{equation}
\label{eq8}
    F_{oa}=\begin{cases}
    F_{L}(x) &\text{if}\quad F_{L}(x)< {\frac{1}{2}}\\
    {\frac{1}{2}}&{\rm otherwise}\\
    F_{R}({x}) &\text{if}\quad F_{R}(x)>{\frac{1}{2}}
    \end{cases}
\end{equation}
However, the paired overbounding requires the actual distribution, and \eqref{eq6} and \eqref{eq7} are stringent for finite real-world data distributions. Take \eqref{eq6} for example, because the real error distribution based on real-world data is finite, which can have $F_{a}(x_0) = 1, x_0<\infty$; while for any Gaussian distribution, $F_{L}(x_0)<1$.

Blanch \cite{blanch2018gaussian} proposed two-step Gaussian overbounding, combining symmetric and paired overbounding by two-step and using an excess mass parameter ($\epsilon$) to relax stringent criteria. Initially, a symmetric unimodal distribution $F_{su}(x)$ overbounds the actual distribution using paired methods relaxed by $\epsilon$. The excess mass parameter relaxes \eqref{eq6} and \eqref{eq7} to solve the finite-sample limitation. Subsequently, a Gaussian distribution is employed to overbound the symmetric unimodal model using the symmetric overbounding technique. The procedure can be detailed as follows, using the left-side overbound as an illustrative example:
\begin{align}
    F_{a}(x) \leq (1+\epsilon)F_{su}(x), \quad \forall x \label{eq9} \\
    F_{su}(x) \leq F_{L}(x), \quad \forall F_{L}(x)\leq \frac{1}{2} \label{eq10}
\end{align}

For multiple $n$ additive errors, this ensures:
\begin{equation}
\begin{aligned}
\label{eq11}
    & F_{a_1+a_2...+a_n}(x)\leq \prod\limits_{i=1}^{n}(1+\epsilon_i) F_{su_1+su_2...+su_n}(x) \\
    & \leq \prod\limits_{i=1}^{n}(1+\epsilon_i) F_{L_1+L_2...+L_n}(x), \quad \forall F_{L_1+L_2...+L_n}(x)\leq \frac{1}{2}
\end{aligned}
\end{equation}
Additional research efforts have explored relaxing the strict Gaussian assumption by using hybrid models—employing a Gaussian core for the main body of the distribution and alternative forms for the tail behavior—to reduce excessive conservatism \cite{larson2019improving}.  Despite these refinements, these methods often (i) lack explicit tightness objectives, relying on search heuristics that may yield overly conservative bounds, and (ii) require full knowledge of the actual error distribution, limiting applicability to conditional error modeling with continuous context features.

\subsection{Learning-Based Overbounding and Two-Stage Pipelines}
Recent works incorporate neural networks (\gls{MLP} and \gls{LSTM}) \cite{no2021machine, liu2024overbounding} to predict conditional distributions via estimated quantiles. Subsequently, a single predicted quantile is used to define a zero-mean Gaussian distribution that matches this quantile—an approach called quantile overbounding. However, assuming a zero-mean Gaussian is a strong constraint that may either result in excessive conservatism or fail to satisfy conservatism, requiring a posteriori verification. Another work \cite{rossl2024robust} has employed \gls{MLP} to predict multiple quantiles at different levels to reconstruct the conditional distribution, followed by applying two-step Gaussian overbounding. These methods can be categorized as two-stage frameworks: the first stage involves learning the conditional distribution using quantile estimates, and the second applies Gaussian overbounding. Although this reduces reliance on explicit distribution knowledge, it inherits two drawbacks: (i) the two-stage structure accumulating conservatism across both stages; and (ii) the absence of built-in mechanisms to minimize the redundancy.

In summary, existing \gls{UQ} approaches (e.g., \gls{QR}/\gls{CP}/variance regression) provide calibrated uncertainty for single-target prediction, yet they do not directly yield the convolution-transitive conservative bounds required for integrity risk propagation across multiple sources. Classical Gaussian overbounding enables such propagation but is difficult to extend to feature-conditioned error distributions and often lacks an explicit tightness objective. In the next section, we present a single-stage learning formulation that integrates overbounding constraints with an explicit tightness criterion to obtain context-conditioned Gaussian bounds suitable for convolution-based conservativeness transitivity.

\section{Overbounding Loss Derivation}\label{sec3}
This section proposes a single-stage overbounding loss enforced on a finite quantile grid.
By optimizing this loss, a neural network directly outputs the mean and standard deviation of a context-conditioned Gaussian overbound.
From a \gls{UQ} perspective, our goal is to achieve (i) one-sided conservatism in the integrity sense on the enforced grid so that the bound supports convolution-based propagation, and (ii) tightness---to avoid unnecessary standard deviation inflation and preserve system availability. Appendix~\ref{app3} clarifies how the finite-grid constraint extends to a certified continuous interval, while Appendix~\ref{app4} summarizes why the objective remains regular on compact domains.

\subsection{Motivation and Formulation}\label{sec3_1}

To address the limitations identified in Section \ref{sec2}, we propose a unified neural network-based method that merges the two-stage overbounding process into a single-stage framework, thus minimizing computational redundancy and reducing excessive conservatism. Specifically, we utilize multiple quantiles to represent the conditional distribution and integrate multiple quantile regression with relaxed paired overbounding constraints to form an "overbounding loss." Additionally, we incorporate the Wasserstein distance into this loss to guide the model toward a near-minimal overbounding distribution under constraints, effectively balancing conservatism and tightness.

Given a dataset $\mathcal{D}=\{x_i,y_i\}^N_{i=1}$, where each $y_i$ is sampled from a conditional distribution $Z|x_i$, our objective is to estimate the conditional quantile $q_\tau$ corresponding to a given quantile level $\tau\in\mathcal{T}$, $\mathcal{T} \subseteq [0, 1]$. The conditional \gls{CDF} of $Z|x$ is denoted as $F_{y\sim Z|x}(y)$, thus $\tau = F_{y\sim Z|x}(q_\tau)$.

We begin by reviewing quantile regression. Estimating a quantile of a distribution involves optimizing a parameterized function $\theta_\tau(x)$ to minimize the quantile regression loss $\mathcal{J}_\tau^{QL}$, defined as:
\begin{align}
&\hat{q}_\tau = \hat{F}^{-1}(\tau)= \hat{\theta}_\tau(x) = \mathop{\arg\min}\limits_{\theta}(\mathcal{J}^{QL}_\tau (y,\theta_\tau(x))) \label{eq12}\\
&\mathcal{J}_\tau^{QL}(y,\theta_\tau(x)) := \mathbb{E}[\mathcal{L}^{QL}_\tau (y-\theta_\tau(x))] \label{eq13}\\
&\mathcal{L}_\tau^{QL}(\epsilon) = \epsilon(\tau- \mathbb{I}(\epsilon<0)), \quad \forall \epsilon \in \mathbb{R} \label{eq14}
\end{align}
where $\mathbb{I}(\cdot)$ is the indicator function.

\subsection{Integrating Paired Overbounding with Excess Mass}\label{sec3_2}
Combining paired overbounding in \eqref{eq6}-\eqref{eq7} and excess mass techniques in \eqref{eq9} in quantile space, we aim to identify Gaussian distributions $F_L$ and $F_R$, parameterized by mean and standard deviation pairs ($\mu_L$, $\sigma_L$) and ($\mu_R$, $\sigma_R$), satisfying:
\begin{align}
    \tau = F(q_\tau)\leq (1+\epsilon) F_{L}(q_\tau), \quad \forall \tau\in\mathcal{T} \label{eq15}\\
1-\tau \leq (1+\epsilon)(1-F_{R}(q_\tau)), \quad \forall \tau\in\mathcal{T} \label{eq16}
\end{align}

These constraints can be restated explicitly in terms of quantile levels as:
\begin{align}
&F_{L}(F^{-1}(\tau))\geq \frac{\tau}{1+\epsilon}, \quad \forall \tau\in\mathcal{T} \label{eq17}\\
&F_R(F^{-1}(\tau)) \leq \frac{\tau+\epsilon}{1+\epsilon}, \quad \forall \tau\in\mathcal{T} \label{eq18}
\end{align}
In the training objective below, these conservatism constraints are enforced on the finite grid $\mathcal{T}$. Appendix~\ref{app3} introduces a certified interval and a quantile-gap function showing how the discrete constraints extend to a continuous interval under additional regularity assumptions.

Thus, the final overbounding distribution $F_{oa}$ is constructed as:
\begin{equation}
F_{oa}(x)=\begin{cases}
(1+\epsilon)F_{L}(x), &\text{if }(1+\epsilon)F_{L}(x)< \frac{1}{2}\\[6pt]
\frac{1}{2}, &\text{otherwise}\\[6pt]
(1+\epsilon)F_R(x)-\epsilon, &\text{if }(1+\epsilon)F_R(x)-\epsilon>\frac{1}{2}
\end{cases}\label{eq19}
\end{equation}

Therefore, the overbounding problem is transformed into constructing the final left and right bounding distributions that satisfy \eqref{eq17} and \eqref{eq18} across all quantile levels $\tau \in \mathcal{T}$. Here, we consider the left bound $F_L$ as an illustrative example. For each quantile level $\tau$, the objective is to determine a Gaussian distribution, parameterized by mean $\mu_\tau$ and standard deviation $\sigma_\tau$, whose \gls{CDF} conservatively bounds the actual distribution, satisfying \eqref{eq17}. Specifically, we define:
\begin{equation}
F_{L_\tau}(q_\tau)=k_\tau \frac{\tau}{1+\epsilon}, \quad \frac{1+\epsilon}{\tau}\geq k_\tau\geq 1 \label{eq20}
\end{equation}

Here, $k_\tau$ is a local overbounding factor at quantile level $\tau$ that ensures the conservative \gls{CDF} condition is satisfied for the auxiliary bound $F_{L_\tau}$. It also quantifies how closely the overbound distribution aligns with the target quantile. Consequently, the quantile $q_\tau$ can be expressed explicitly as:
\begin{equation}
\begin{aligned}\label{eq21}
q_\tau &= F_{L_\tau}^{-1}\left(k_\tau\frac{\tau}{1+\epsilon}\right) \\
&= \mu_\tau + Q^{-1}\left(k_\tau\frac{\tau}{1+\epsilon}\right)\sigma_\tau,
\end{aligned}
\end{equation}
where $Q^{-1}(\cdot)$ denotes the inverse \gls{CDF} (quantile function) of the standard Gaussian distribution.

By integrating \eqref{eq21} into quantile regression, the estimated quantile $\hat{q}_\tau$ is obtained by optimizing the parameter set $\theta_\tau(x)=\{\mu_\tau(x),\sigma_\tau(x),k_\tau(x)\}$ through minimizing the quantile regression loss:
\begin{equation} \label{eq22}
\begin{aligned}
\hat{q}_\tau &= \hat{\theta}_\tau(x) = \mathop{\arg\min}_{\theta_\tau} \mathcal{J}^{QL}_\tau(y,\theta_\tau) \\
&= \mathop{\arg\min}_{\mu_\tau, \sigma_\tau, k_\tau} \mathcal{J}^{QL}_\tau\left(y,\, \mu_\tau(x) + \sigma_\tau(x) Q^{-1}\!\left(k_\tau(x)\tfrac{\tau}{1+\epsilon}\right)\right)
\end{aligned}
\end{equation}
with constraints:
\begin{equation} \label{eq23}
\frac{1+\epsilon}{\tau}\geq k_\tau\geq 1.
\end{equation}

As the magnitude of each quantile loss $\mathcal{J}^{QL}_\tau$ inherently depends on the quantile level $\tau$, normalization is necessary to balance contributions across different quantiles. This normalization also highlights the distribution tails, enhancing conservatism in the predictions. Consequently, we introduce reweighting in the loss function:
\begin{equation}
\mathcal{J}^{MQL} = \sum_{\tau \in \mathcal{T}} w_\tau \mathcal{J}^{QL}_\tau,
\label{eq26}
\end{equation}
where the weight  is defined as:
\begin{equation}
w_\tau = \frac{1}{4\tau(1-\tau)}.
\label{eq27}
\end{equation}

A detailed derivation of this weighting factor is provided in Appendix A. This approach ensures each quantile contributes roughly equally, emphasizing tail quantiles to maintain robust and conservative predictions.

Therefore, even after obtaining a set of unique quantile solutions $\{\hat{q}_\tau\}_{\tau\in\mathcal{T}}$, multiple combinations of parameters $(\mu_\tau,\sigma_\tau,k_\tau)$ can still satisfy \eqref{eq21}. It is thus essential to control the degrees of freedom of these parameters through additional constraints and penalty terms. The final overbounding distribution must fulfill the overbounding criteria across all quantile levels and be optimal in terms of conservativeness and tightness. Next, we analyze various parameter constraints and introduce penalty terms that yield a relatively optimal and conservative overbounding distribution.

The final overbounding distribution ${F}_{L}(\hat{q}_\tau)$, parameterized by a selected pair $(\mu, \sigma)$ from $\{(\mu_\tau, \sigma_\tau)\}_{\tau\in \mathcal{T}}$, must satisfy the overbounding condition in \eqref{eq15}. A sufficient condition to guarantee conservativeness is that its \gls{CDF} dominates all auxiliary bounds:
\begin{equation}\label{eq28}
    F_{L}(\hat{q}_\tau)
    =F_{\mathcal{N}(\mu, \sigma)}(\hat{q}_\tau)
    \;\geq\;
    F_{L_\tau}(\hat{q}_\tau)
    =F_{\mathcal{N}(\mu_\tau,\sigma_\tau)}(\hat{q}_\tau),
    \  \forall \tau \in \mathcal{T},
\end{equation}
where $F_{\mathcal{N}(\mu, \sigma)}$ denotes the \gls{CDF} of a Gaussian distribution with mean $\mu$ and standard deviation $\sigma$.

To make this relation explicit in quantile space, we introduce a global overbounding factor $K_\tau$ at each quantile level $\tau$ and write
\begin{align}
    F_{L}(\hat{q}_\tau)
    &= K_\tau \frac{\tau}{1+\epsilon}
    \;\geq\;
    F_{L_\tau}(\hat{q}_\tau)
    = \frac{k_\tau \tau}{1+\epsilon},
    \label{eq29}\\
    \implies\quad
    K_\tau &\geq k_\tau,
    \label{eq30}
\end{align}
where $k_\tau$ is the local overbounding factor defining the auxiliary bound $F_{L_\tau}$ in \eqref{eq20}, and $K_\tau$ is the corresponding global overbounding factor induced by the final bound $F_L$ at quantile level $\tau$. By construction, \eqref{eq30} guarantees that $F_L$ is at least as conservative as $F_{L_\tau}$ for every $\tau$.

Applying the inverse \gls{CDF} to both $F_L$ and $F_{L_\tau}$ then yields
\begin{equation}
\begin{aligned}\label{eq31}
    \hat{q}_\tau
    &= \mu_\tau
       + Q^{-1}\!\left(k_\tau\frac{\tau}{1+\epsilon}\right)\sigma_\tau \\
    &= \mu
       + Q^{-1}\!\left(K_\tau\frac{\tau}{1+\epsilon}\right)\sigma
       \;\geq\;
       \mu + Q^{-1}\!\left(k_\tau\frac{\tau}{1+\epsilon}\right)\sigma,
\end{aligned}
\end{equation}
To stabilize optimization and reduce redundancy in the parameter space, we next consider three constraint scenarios on $(\mu_\tau,\sigma_\tau,k_\tau)$ and their shared counterparts.

\subsection{Case 1: Constant $\mu_\tau$ and $k_\tau$ across quantile levels}\label{sec3_3}

If we enforce $\mu_\tau = \mu, k_\tau =k$, for all $\tau \in \mathcal{T}$, optimization parameters reduce to $(\mu, \{\sigma_\tau\}_{ \tau\in \mathcal{T}}, k)$. Given \eqref{eq31}, we derive the necessary condition:
\begin{equation}
    Q^{-1}(k\frac{\tau}{1+\epsilon}) \sigma_\tau \geq Q^{-1}(k\frac{\tau}{1+\epsilon})\sigma \text{, }\forall\tau\in \mathcal{T} \label{eq32}
\end{equation} 
Since the sign of $Q^{-1}(k\frac{\tau}{1+\epsilon})$ depends on $\tau$, the inequality results in two opposing constraints for $\sigma$:
\begin{equation}\label{eq33}
\sigma = \begin{cases}
    \sigma\geq\sigma_{up} = \max(\sigma_\tau),  &\text{if }\tau\leq \frac{1+\epsilon}{2k}\\
    \sigma \leq \sigma_{lo} = \min(\sigma_\tau), &\text{if } \tau \geq \frac{1+\epsilon}{2k}
\end{cases}
\end{equation}

In general, this leads to contradictions ($\sigma_{up}\leq \sigma\leq \sigma_{lo}$ cannot always hold). Therefore, enforcing constant $\mu_\tau$ and $k_\tau$ across all quantile levels is typically infeasible.

\subsection{Case 2: Constant $\sigma_\tau$ and $k_\tau$ across quantile levels}\label{sec3_4}
If we set $\sigma_\tau = \sigma, k_\tau = k$ for all $\tau$, the optimization variable become $(\{\mu_\tau\}_{\tau \in\mathcal{T}}, \sigma, k)$. According to \eqref{eq31}, this implies $\mu_\tau \geq \mu, \quad \forall \tau \in \mathcal{T}$, thus $\mu = \min(\mu_\tau)$. To guarantee positivity for $\sigma$, we optimize its logarithmic transformation: $\log_{\sigma} = \log(\sigma)$. Moreover, the parameter $k_\tau$ is constrained to the interval $(1,\frac{1+\epsilon}{\tau})$. Therefore, the parameter $k$ must satisfy $1 \leq k\leq \min(\frac{1+\epsilon}{\tau}) \to 1\leq k \leq1+\epsilon$, which we enforce via a sigmoid transformation:
\begin{equation}\label{eq34}
    k = 1 + \epsilon \cdot \text{sigmoid}(s_k)
\end{equation}
with unconstrained parameter $s_k$. Thus, the optimization parameters become $(\log_{\sigma}, \{\mu_\tau\}_{\tau \in\mathcal{T}}, s_k)$. 
Still, multiple valid solutions may exist: different auxiliary parameter combinations can produce the same quantile values through \eqref{eq21}, so the representation is not unique. Penalty terms (see Section \ref{sec3_6}) are therefore included to encourage stable optimization behavior toward a useful conservative solution.

\subsection{Case 3: Constant $\mu_\tau$ and $\sigma_\tau$ across quantile levels}\label{sec3_5}
Enforcing $\mu_\tau=\mu$ and $\sigma_\tau=\sigma$ for all $\tau$ ensures \eqref{eq31} is always satisfied, making this scenario feasible. To ensure $\sigma > 0$, we optimize its logarithm $\log_{\sigma}$. Each $k_\tau$ must lie within $(1,\frac{1+\epsilon}{\tau})$, enforced through individual sigmoid transformations:
\begin{equation} \label{eq35}
    k_\tau = 1 + (\frac{1+\epsilon}{\tau}-1) * \text{sigmoid}(s_{k_\tau})
\end{equation}
where $s_{k_\tau}$ is the unconstrained parameter for each quantile level. Thus, optimization variables reduce to $(\mu, \log_{\sigma}, \{s_{k_\tau}\}_{\tau \in\mathcal{T}})$. 
As in Case~2, the free auxiliary variables are not unique, so parameter sharing and penalties are used to select a stable conservative representative rather than to recover a unique latent triplet for every quantile.

\subsection{Penalty Term Formulation}\label{sec3_6}
To evaluate the quality of an overbounding distribution, two primary aspects should be considered: (1) adherence to conservative constraints and (2) minimality of the tightness or induced \gls{PL} (in aviation). However, directly evaluating the \gls{PL} in aviation is challenging because it depends on multiple factors such as integrity risk (a safety requirement), the number and characteristics of error sources, and the specific computation methods, making it difficult to derive a fixed analytical expression for all systems.

To address this challenge in a more tractable manner, we propose measuring the distance between the \gls{CDF} of the actual distribution and the corresponding overbounding distribution. Because directly measuring distances in probability space is often impractical due to unknown actual probabilities for the conditional distribution, we instead utilize the Wasserstein distance, also known as the \gls{EMD}, which quantifies differences between distributions effectively in quantile space. A smaller Wasserstein distance indicates that the overbounding distribution closely approximates the actual distribution across quantile levels.

Formally, the general p-th order Wasserstein distance ($W_p$) is defined as:
\begin{equation}\label{eq36}
W_p = \left(\int_0^1\left|F_{oa}^{-1}(\tau)-F_{a}^{-1}(\tau)\right|^p d\tau\right)^{\frac{1}{p}}
\end{equation}
From the monotonicity property of the \gls{CDF} and the overbounding condition in \eqref{eq17}, we derive:
\begin{align}
F^{-1}(\tau)\geq F_L^{-1}\left(\frac{\tau}{1+\epsilon}\right), \quad \forall \tau\in\mathcal{T}. \label{eq37}
\end{align}

This inequality indicates that in quantile space, $F_L^{-1}(\frac{\tau}{1+\epsilon})$ must conservatively overbound $F^{-1}(\tau)$ for all enforced quantile levels, and by Appendix~\ref{app3}, Theorem~\ref{thm:grid_to_continuum}, also on the entire certified interval when the stated regularity assumptions hold.
For the left tail, we have $F_{oa}(x)=(1+\epsilon)F_L(x)$ whenever $(1+\epsilon)F_L(x)<\tfrac12$. Equivalently, in quantile space this corresponds to $\tau\in[0,\tfrac12]$, under which
$F_{oa}^{-1}(\tau)=F_L^{-1}\!\bigl(\tfrac{\tau}{1+\epsilon}\bigr)$.
Therefore, the left-tail Wasserstein distance $W_{p_L}$ can be written as:

\begin{align}
W_{p_L}
&:= \left(\int_{0}^{\tfrac12}
\left|F_{oa}^{-1}(\tau)-F^{-1}(\tau)\right|^{p}\,d\tau\right)^{\tfrac{1}{p}}
\nonumber \\
&= \left(\int_{0}^{\tfrac12}
\left|F_L^{-1}\!\left(\tfrac{\tau}{1+\epsilon}\right)-F^{-1}(\tau)\right|^{p}\,d\tau\right)^{\tfrac{1}{p}}
\label{eq39}
\end{align}

For $p=1$, the left side of Wasserstein-1 distance $W_{1_L}$ for overbounding task becomes:

\begin{equation}\label{eq40}
\begin{aligned}
W_{1_L}
={}& \int_0^{\frac{1}{2}}
\left|F^{-1}(\tau)-F_L^{-1}\!\left(\frac{\tau}{1+\epsilon}\right)\right|\,d\tau \\
\propto{}& \sum_{\tau\in\mathcal{T},\,\tau<\frac{1}{2}}
\left|q_\tau-\left(\mu+Q^{-1}\!\left(\frac{\tau}{1+\epsilon}\right)\sigma\right)\right| \\
\approx{}& \sum_{\tau\in\mathcal{T},\,\tau<\frac{1}{2}}
\Bigl|\mu_\tau + Q^{-1}\!\left(k_\tau\frac{\tau}{1+\epsilon}\right)\sigma_\tau \\
&\qquad\ -\left(\mu+Q^{-1}\!\left(\frac{\tau}{1+\epsilon}\right)\sigma\right)\Bigr|
\end{aligned}
\end{equation}

To encourage the optimization to favor a tight yet conservative overbounding distribution, we incorporate the Wasserstein distance as a penalty term into the loss function. This penalty term, $\mathcal{J}_{p}$, is explicitly defined as:

\begin{equation}\label{eq41}
\begin{aligned}
\mathcal{J}_{p} ={}& \sum_{\tau \in \mathcal{T}, \tau < \frac{1}{2}} 
\left| \mu_\tau + Q^{-1}\left(k_\tau\frac{\tau}{1+\epsilon}\right)\sigma_\tau \right. \\
&\qquad \left. - \left(\mu + Q^{-1}\left(\frac{\tau}{1+\epsilon}\right)\sigma\right)\right|
\end{aligned}
\end{equation}
Thus, the complete cost function combines the multiple quantile loss ($\mathcal{J}^{MQL}$) and this penalty term, balanced by a regularization parameter  $\lambda$:
\begin{equation}
    \mathcal{J} = \mathcal{J}^{MQL} + \lambda \mathcal{J}_{p}
\end{equation}

Introducing the penalty term $\mathcal{J}_{p}$ modifies the quantile estimates slightly, potentially violating the conservatism condition ($\hat{q}_\tau \leq q_\tau$). Specifically, if $F(\hat{q}_\tau)>\tau$, the conservatism constraint is breached. To guard against penalty-induced underestimation, we scale each quantile level $\tau$ by a factor $t$ in the quantile loss terms, replacing $\tau$ with $t\tau$.
For parameter configurations analyzed (Cases 2 and 3 from the previous section), Appendix~\ref{app2} derives sufficient first-order stationarity conditions for the scaling factor $t$. The analysis concludes that Case 3 is numerically unreliable in the extreme tail and is therefore not adopted as the main method. In contrast, Case 2 provides the following worst-case sufficient condition (corresponding to $n_{\min}=1$):
\begin{equation}
    t \leq 1-(n_L-1) \frac{\lambda}{w_\tau \tau}
\end{equation}
where $n_L$ denotes the number of quantiles satisfying $\tau<\frac{1}{2}$. Since this derivation is grid-based, we interpret the practical choice of
\begin{equation}
    t=1-200\lambda
\end{equation}
as a heuristic finite-grid safety margin.

To ensure monotonicity among the estimated quantiles, we introduce a monotonicity penalty term, denoted as  $\mathcal{J}_m$. This term ensures the sequential consistency of quantile estimates across increasing quantile levels, thereby enhancing model stability. The monotonicity penalty is formulated as follows:
\begin{equation}
    \mathcal{J}_m = \sum_{i=1}^{T-1} \max\left(\hat{q}_{\tau_i}-\hat{q}_{\tau_{i+1}}, 0\right),
\end{equation}
where $T$ represents the total number of quantile levels and $\tau_1<\tau_2<\cdots<\tau_T$.
Consequently, the final unified cost function incorporating the multiple quantile loss, Wasserstein distance-based penalty, and quantile monotonicity constraints is expressed as:

\begin{equation}\label{eq:full_loss}
\begin{aligned}
\mathcal{J} ={ }&
\sum_{\tau \in \mathcal{T}} w_\tau \frac{1}{n} \sum_{i=1}^{n}
\left( y_i - \left( \mu_\tau + Q^{-1}\!\left( k_\tau \frac{\tau}{1+\epsilon} \right) \sigma_\tau \right) \right) \\
& \times \left( t\,\tau - \mathbb{I}\!\left( y_i - \left( \mu_\tau + Q^{-1}\!\left( k_\tau \frac{\tau}{1+\epsilon} \right) \sigma_\tau \right) < 0 \right) \right) \\
& + \lambda \sum_{\substack{\tau \in \mathcal{T}, \\ \tau < \frac{1}{2}}}
\left| \mu_\tau + Q^{-1}\!\left( k_\tau \frac{\tau}{1+\epsilon} \right) \sigma_\tau \right. \\
& \quad \quad \quad \left. - \left( \mu + Q^{-1}\!\left( \frac{\tau}{1+\epsilon} \right) \sigma \right) \right| \\
& + \beta \sum_{i=1}^{T-1} \max\!\left( \hat{q}_{\tau_i} - \hat{q}_{\tau_{i+1}},\, 0 \right)
\end{aligned}
\end{equation}

where the hyperparameters $\lambda$ and $\beta$ balance the importance of the Wasserstein penalty and the monotonicity constraint, respectively.
Appendix~\ref{app4} establishes that, on compact parameter domains under Case~2, the objective in \eqref{eq:full_loss} is locally Lipschitz, differentiable almost everywhere, and admits Clarke subgradients. This supports the first-order stationarity analysis in Appendix~\ref{app2}.

Regarding the right bound, the identical procedure applies symmetrically. Specifically, we derive the left bound $\bar{F_L}(x)$ of the conditional distribution $-y_i\sim Z|x_i$. The corresponding right bound distribution $F_R$ thus shares the standard deviation but possesses an opposite mean compared to $\bar{F_L}(x)$, completing the overbounding construction.

\section{Simulation and Experimental Settings}\label{sec4}
This section describes the configurations used for both the simulation and real-world experiments conducted to validate the proposed overbounding method.
\subsection{Simulation with Different Distribution Types}\label{sec4_1}
To evaluate the general applicability of our method, we designed simulations involving three distinct synthetic distributions. For each type, 300,000 samples were drawn from Gaussian mixture models with three components, specified as follows:
\begin{itemize}
\item Type 1 (Multimodal): A mixture of Gaussian distributions $\mathcal{N}(-5,1)$, $\mathcal{N}(0,2)$, and $\mathcal{N}(5,4)$, each with equal weight. This distribution is approximately zero-mean, exhibiting multimodality and left-side concentration.
\item Type 2 (Negative Mean): A mixture of $\mathcal{N}(-5,1)$, $\mathcal{N}(0,2)$, and $\mathcal{N}(0,4)$, each equally weighted. This distribution is skewed negatively, displaying left-heavy tails.
\item Type 3 (Positive Mean, Heavy Tail): A mixture of $\mathcal{N}(0,1)$, $\mathcal{N}(0,2)$, and $\mathcal{N}(5,4)$, equally weighted. This distribution has a positive mean with heavy right-side tails.
\end{itemize}

Fig. \ref{fig:simulation_distributions} illustrates the \gls{PDF} of these distributions along with their respective histograms.
\begin{figure*}[!htbp]
    \centering
    \subfloat[]{\includegraphics[width=0.3\textwidth]{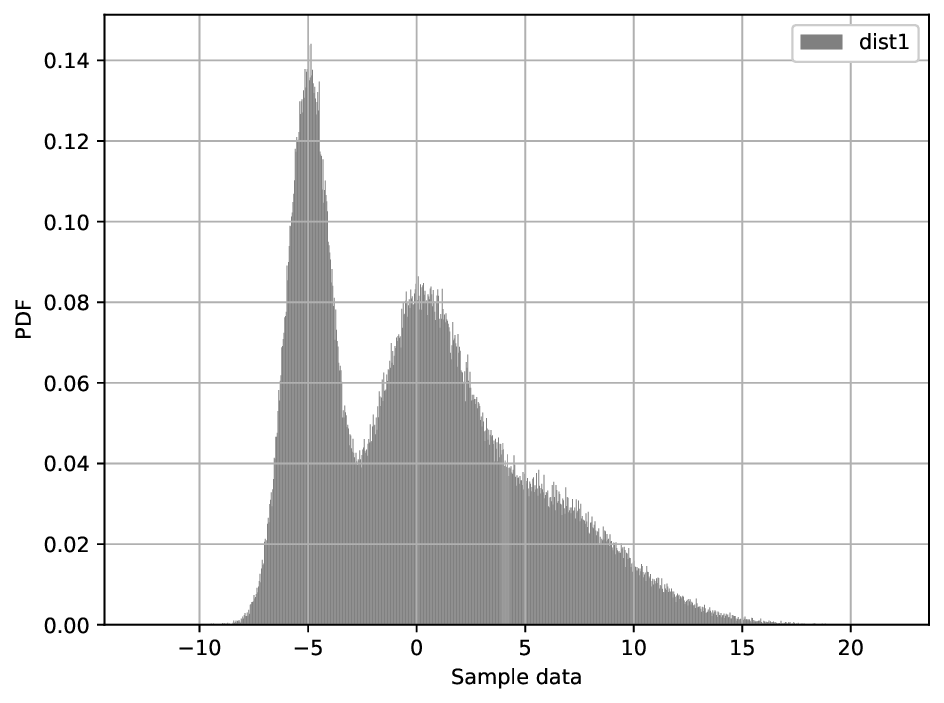}}
    \hfill
    \subfloat[]{\includegraphics[width=0.3\textwidth]{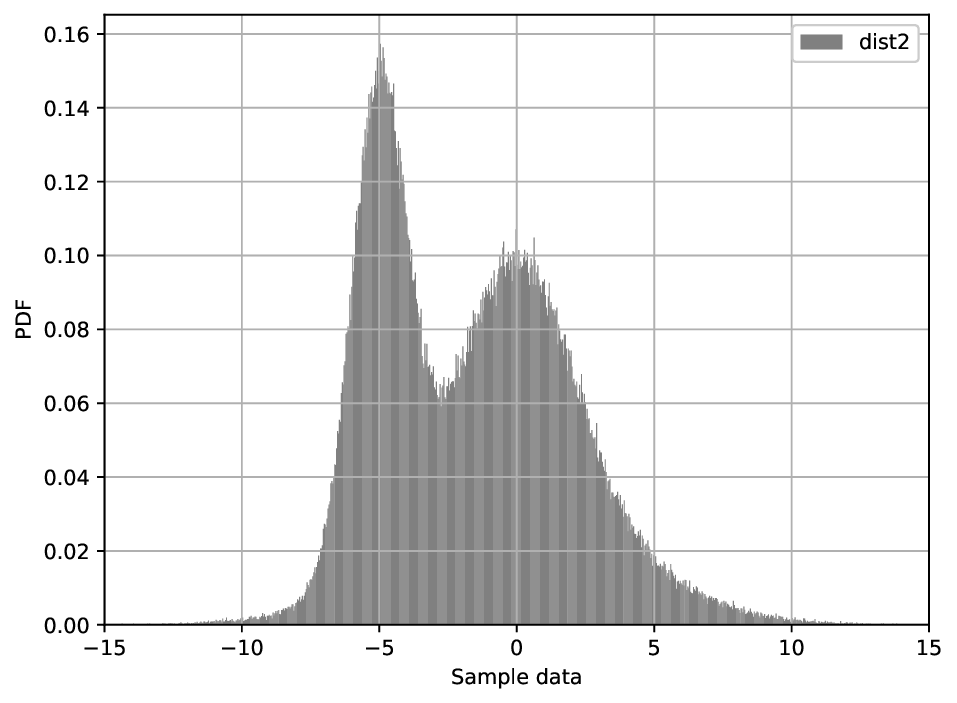}}
    \hfill
    \subfloat[]{\includegraphics[width=0.3\textwidth]{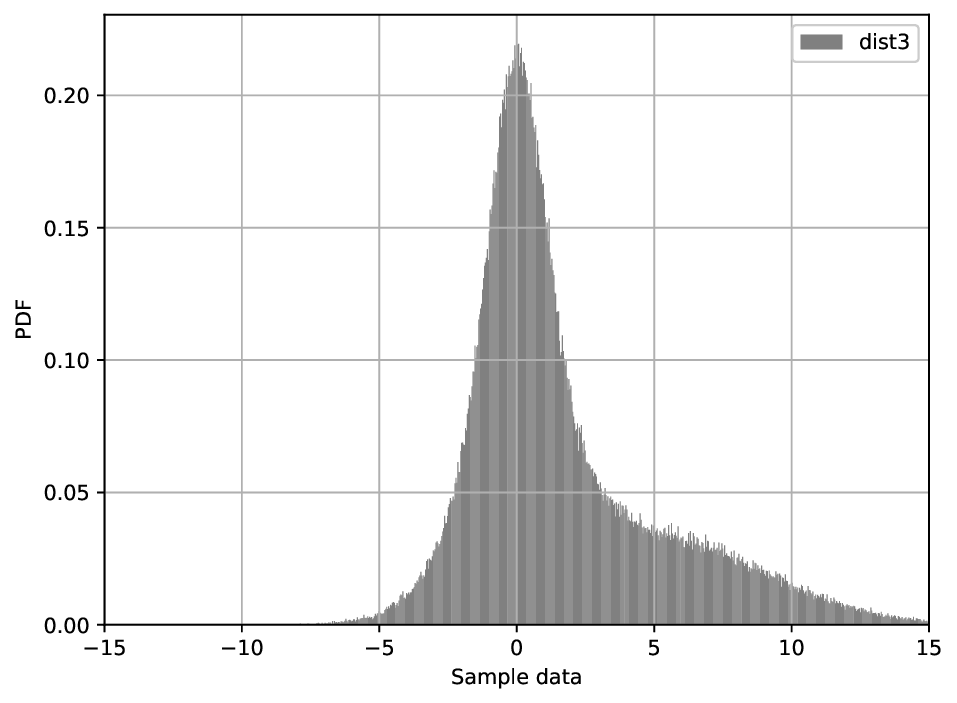}}
    \caption{Histograms of the three representative distribution types:
    (a) Type 1 distribution, (b) Type 2 distribution, and (c) Type 3 distribution.}
    \label{fig:simulation_distributions}
\end{figure*}

Because the proposed loss is backbone-agnostic, it can be combined with various architectures (e.g., \gls{MLP}, \gls{LSTM}). To isolate the behavior of the loss, in synthetic simulations we directly optimized $\{\mu_\tau,s_k,\log_\sigma\}$ as outlined in Section~\ref{sec3}, without a neural network backbone. To mitigate potential gradient instability, $s_k$ was clamped within $[-15,15]$.
Table~\ref{tab:training_hyperparameters} summarizes the specific training hyperparameters, consistent across all simulations. The excess mass parameter $\epsilon$ is set according to \cite{blanch2018gaussian}.

\begin{table}[!htbp]\begin{center}
\caption{Training hyperparameters \label{tab:training_hyperparameters}}
\centering
\begin{tabular}{l|l}
    \hline
    Setting                    & Value \\ \hline
    Epochs                     & 50000     \\ 
    Optimizer                  & Adam    \\
    Batch size                 & 10000       \\ 
    Start learning rate (LR)   & 0.01     \\
    Learning rate schedule     & Cosine   \\ 
    Warmup epochs              & 20       \\
    Weight decay               & 0.0      \\ 
    Cooldown epochs            & 30       \\ 
   $\lambda$                   & 0.00001 \\ 
   $\beta$                     & 0.001 \\ 
    $\epsilon$                 & 0.0025 \\ \hline
\end{tabular}\end{center}
\end{table}

We compare our method (specifically, Case 2 as defined in Section \ref{sec3}) against the two-step Gaussian overbounding (Two-Step OB), paired overbounding (Paired OB) with excess mass technique \cite{blanch2018gaussian}, and quantile overbounding (Quantile OB), with quantile level at 0.99. Performance metrics used for comparison include \gls{PL}, Wasserstein distance, and average global overbounding factor $\mathcal{K}_\tau$.

The protection level calculation follows the approach described by \cite{blanch2018gaussian}. Considering $n$ independent random variables $x_k$, each overbounded by of $\mathcal{N}(\mu,\sigma)$, their mean $y = \frac{1}{n}\sum_{k=1}^n x_k$ is conservatively bounded by $\mathcal{N}(\mu,\frac{\sigma}{\sqrt{n}})$. As a consequence, the following inequality should be met:
\begin{align} 
\Pr\left({\frac{1}{n}\sum\limits_{k = 1}^n {{x_k}} \leq PL_L^{(n)}} \right) \leq {\left({1 + \epsilon } \right)^n}Q\left({\sqrt n \frac{{PL_L^{(n)} - \mu}}{\sigma }} \right)
\end{align}
Hence, the protection level $PL_L$ for a given \gls{IR} ( $ \text{IR} = (1+\epsilon)^n\, Q\!\left(\sqrt{n}\,\tfrac{PL_L^{(n)}-\mu}{\sigma}\right) $) is computed as:
\begin{equation}
    PL_L^{(n)} = {Q^{ - 1}}\left({\frac{{{\rm{IR}}}}{{{{\left({1 + \epsilon } \right)}^n}}}} \right)\frac{\sigma }{{\sqrt n }} + \mu
\end{equation}
We set the $IR$ to $10^{-3}$, consistent with typical misdetection probabilities in \gls{ARAIM} systems, evaluating $PL^1$ ($n=1$) and $PL^{10}$ ($n=10$) scenarios.

\textbf{Bonferroni-style additive bound vs convolution-based PL}

As a distribution-free alternative to convolution-based propagation, one may use a Bonferroni (union-bound) aggregation.
Allocating $\frac{IR}{n}$ to each term yields the one-sided bound
\begin{equation}
\begin{aligned}
PL^{(n)}_{L,\mathrm{Bonf}}
= {} & \frac{1}{n}\sum_{k=1}^{n}\left(\mu_k+\sigma_k\,Q^{-1}\!\left(\frac{\rm{IR}}{n}\right)\right) \\
= {} & {Q^{ - 1}}\left({\frac{{{\rm{IR}}}}{n}} \right)\sigma + \mu,
\label{eq:bonf_bound}
\end{aligned}
\end{equation}
This bound is conservative since:
\begin{align}
b_k &:= \mu_k+\sigma_k\,Q^{-1}\!\left(\tfrac{IR}{n}\right), \label{eq:bonf_bk}\\
&\Pr\!\left(\tfrac{1}{n}\sum_{k=1}^n x_k \le PL^{(n)}_{L,\mathrm{Bonf}}\right)
\le \Pr\!\left(\bigcup_{k=1}^n \{x_k \le b_k\}\right) \nonumber\\
&\le \sum_{k=1}^n \Pr(x_k \le b_k)
= \sum_{k=1}^n \tfrac{IR}{n}
= IR. \label{eq:bonf_proof}
\end{align}

In practice, $PL^{(n)}_{L,\mathrm{Bonf}}$ is typically more conservative (i.e., yields a larger magnitude bound) than the convolution-based $PL_L^{(n)}$ for $n\ge 2$ and $\rm{IR}<0.5$. Accordingly, we do not include interval-based \gls{UQ} baselines (e.g., standard \gls{CP}/\gls{CQR}) in our comparisons, since their multi-error aggregation typically relies on Bonferroni-style composition, yielding substantially more conservative bounds than convolution-based \glspl{PL}.

The left-tail Wasserstein distance ($W_L$) is computed as:
\begin{equation}
W_{L} =\sum_{\tau<\frac{1}{2}}\left|F^{-1}(\tau) - F_L^{-1}\left(\frac{\tau}{1+\epsilon}\right)\right|
\end{equation}
The quality indicator ($\mathcal{K}_L$) is calculated as the average of the global overbounding factor $K_\tau$ over the left-tail quantiles ($\mathcal{T}_L:=\tau <\frac{1}{2}$), where each $k_\tau$ is derived from the final overbounding distribution as defined in \eqref{eq20}:
\begin{equation}
    \mathcal{K}_L = \frac{1}{|\mathcal{T}_L|}\sum_{\tau<\frac{1}{2}}\left(\frac{(1+\epsilon)F_L(q_\tau)}{\tau}-1\right)
\end{equation}
And we use $|\mathcal{T}|=100$ quantile levels uniformly spaced in $(0,1)$ unless otherwise stated.

To capture a portion of epistemic uncertainty, we train an ensemble of $N_{\mathrm{ens}}=5$ models with identical architecture and hyperparameters but different random seeds.
For each member $j\in\{1,\dots,N_{\mathrm{ens}}\}$, the method outputs Gaussian bound parameters
$(\mu_L^{(j)},\sigma_L^{(j)})$ and $(\mu_R^{(j)},\sigma_R^{(j)})$, from which we compute the corresponding two-sided protection levels $(PL_L^{1,(j)},PL_R^{1,(j)})$.
We then select the most conservative parameters on each tail by
\begin{equation}
j_L^\star=\mathop{\arg\min}_{j\in\{1,\dots,N_{\mathrm{ens}}\}} PL_L^{1,(j)},\qquad
j_R^\star=\mathop{\arg\max}_{j\in\{1,\dots,N_{\mathrm{ens}}\}} PL_R^{1,(j)}.
\end{equation}
Accordingly, we report $(\mu_L,\sigma_L)=(\mu_L^{(j_L^\star)},\sigma_L^{(j_L^\star)})$ and $(\mu_R,\sigma_R)=(\mu_R^{(j_R^\star)},\sigma_R^{(j_R^\star)})$.
The same ensemble protocol is applied to both the synthetic simulations and the real-world experiments.

\subsection{Ablation Study}\label{sec4_2}
An ablation study was conducted using Type 1 distribution to analyze the contribution of individual components. Hyperparameters were kept consistent with previous simulations, unless explicitly stated otherwise. Five key aspects were examined:
\begin{enumerate}
\item Comparison of cases 1, 2, and 3 to verify the validity of constraints and parameterizations.
\item Assessment of the Wasserstein penalty term's impact on the loss function.
\item Performance evaluation comparing scenarios with and without quantile weighting factor $w_\tau$.
\item Influence of different scaling factors $t$ (with values $t=1$, $t=1-50\lambda$, and $t=1-200\lambda$).
\item Analysis of the learnable versus fixed local overbounding factor $k_\tau$.
\end{enumerate}

\subsection{Real world experiments}\label{sec4_3}
To demonstrate the practical effectiveness of our proposed method, we conducted experiments across three real-world error scenarios:
\subsubsection{Troposphere residual errors}
We analyzed data from the Hong Kong reference station “HKSL” throughout 2022. Tropospheric residual errors were computed as the difference between estimated and ground truth \gls{ZWD} values. The ground truth \gls{ZWD} was obtained by subtracting the dry delay component from total zenith delay values derived from \gls{IGS} post-processed products downloaded via the \gls{CDDIS}\cite{IGS_TROP}. The estimated \gls{ZWD} was obtained by applying \gls{GPR} to interpolate HKSL's \gls{ZWD} using \gls{PPP}-derived \gls{ZWD} values from other Hong Kong reference stations\cite{hou2024interpolation}. Given the relatively small magnitude of these residual errors, all values were converted from meters to centimeters to facilitate stable model training. The tropospheric residual errors exhibited a symmetric and unimodal distribution, satisfying the symmetric overbounding assumption. Under these conditions, applying full constraints across all quantile levels, as required by paired overbounding, introduces unnecessary conservatism. Therefore, we introduced a modified version of our approach—termed "Case 2-Half Constrained"—which enforces conservative constraints only for the lower half of the quantile range ($\tau<\frac{1}{2}$). We compared our methods ("Case 2" and "Case 2-Half Constrained") against paired overbounding, two-step Gaussian overbounding, and quantile overbounding methods.
\subsubsection{Multipath errors (UrbanNav dataset)}
We used the UrbanNav dataset~\cite{hsu2023hong}, containing raw \gls{GNSS} measurements and multi-sensor data collected via a Honda electric vehicle operating in urban canyons. Ground-truth multipath errors were extracted using the approach presented in~\cite{liu2024overbounding}. To ensure consistency, we employed the same \gls{GNSS} features (elevation angle, azimuth angle, \gls{CN0}) and neural network backbone (MPNMLP) as~\cite{liu2024overbounding}, excluding the time-series (MPNLSTM) model for simplicity.
Multipath errors exhibit distinct conditional distributions at each time step due to changing input features. To visualize the data uniformly, we normalized each residual using the model-predicted parameters: $y_{norm} = \tfrac{y-\mu}{\sigma}$. After normalization, the distribution of $y_{norm}$ should ideally be conservatively bounded by a standard normal distribution. To prevent extremely small predicted standard deviations from inflating $y_{norm}$, we introduce a minimum threshold via the transformation: $\sigma = \sigma_{\min}+\exp(\log_{\sigma})$. The metrics $PL^1$ and $PL^{10}$ were computed as the averages over all time steps. The overbounding factor $\mathcal{K}$ was calculated after normalization. Since $\sigma$ varies across time steps, the Wasserstein distance of $y_{\text{norm}}$ will have different weights for each epoch, which is meaningful, and we therefore omitted this metric from this evaluation. Our results were compared against quantile overbounding (using identical prior features), as well as two-step Gaussian and paired overbounding methods applied to the unconditional multipath error distribution due to the lack of full conditional distribution knowledge.
\subsubsection{Ionosphere residual errors (Hong Kong)}
We evaluated our method using ionospheric residual errors obtained from \gls{TEC} data measured by 19 Continuously Operating Reference Stations (CORS) across Hong Kong. The data focused on extreme ionospheric disturbances occurring from January 1, 2020, to June 30, 2025, corresponding to Solar Cycle 25, identified through geomagnetic index thresholds ($K_p \geq 8$ and $Dst < -100 nT$), yielding over 3,000,000 samples\cite{liu2023ionospheric-1d6, liu2024ionospheric}. Residual ionospheric delays and their safety bounds were calculated via universal Kriging as a baseline~\cite{walter2001robust, sparks2011estimating, ren2024anisotropic}. In traditional \gls{SBAS}, the ionospheric error bound (\gls{GIVE}) is composed of two uncertainty components: the sampled uncertainty ($\sigma_{sampled}$), derived during delay estimation, and the unsampled uncertainty ($\sigma_{unsampled}$), tabulated using discretized bins of \gls{Rfit} and \gls{RCM} (40 bins total).
To replicate and enhance this structure, we directly input continuous values of \gls{Rfit} and \gls{RCM} into a five-layer \gls{MLP} network (hidden layers: 512, 1024, 2048, 1024, 512 units) to learn a conditional overbounding distribution. Predictions were similarly normalized: $y_{norm} = \tfrac{y-\mu}{\sigma}$. Given that the \gls{SBAS} methodology assumes symmetric, zero-mean, unimodal error distributions, we tested the symmetric (half-constrained) variant of our proposed overbounding loss ("Case 2-Half Constrained"). Additionally, to assess the value of supplementary features, we incorporated elevation angle as an extra input (denoted "Case 2-Half+Elev"). We compared our results against two-step Gaussian and paired overbounding methods applied to the overall (unconditional) error distribution. Metrics used for evaluation matched those in the multipath error scenario.

Fig. \ref{fig:real_data_distributions} shows the empirical total error distributions (without prior conditioning) for each scenario. Tropospheric residuals appear near-Gaussian; multipath errors are positively skewed with multimodal distributions, while ionospheric residuals resemble Laplace distributions, highlighting diverse practical conditions addressed by our method.
\begin{figure*}[!htbp]
    \centering
    \subfloat[]{\includegraphics[width=0.3\textwidth]{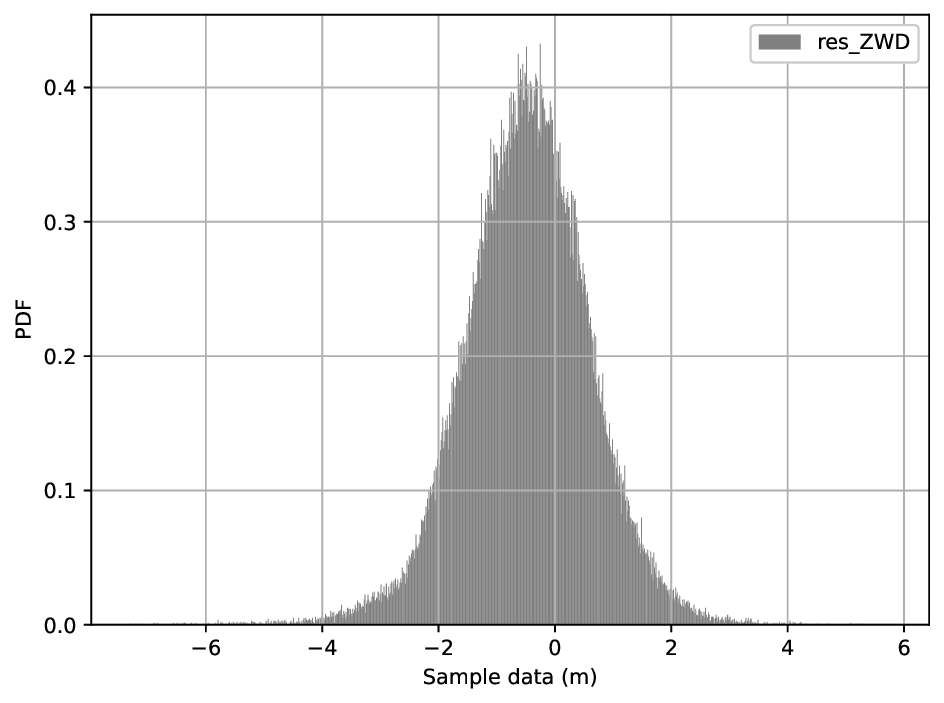}}
    \hfill
    \subfloat[]{\includegraphics[width=0.3\textwidth]{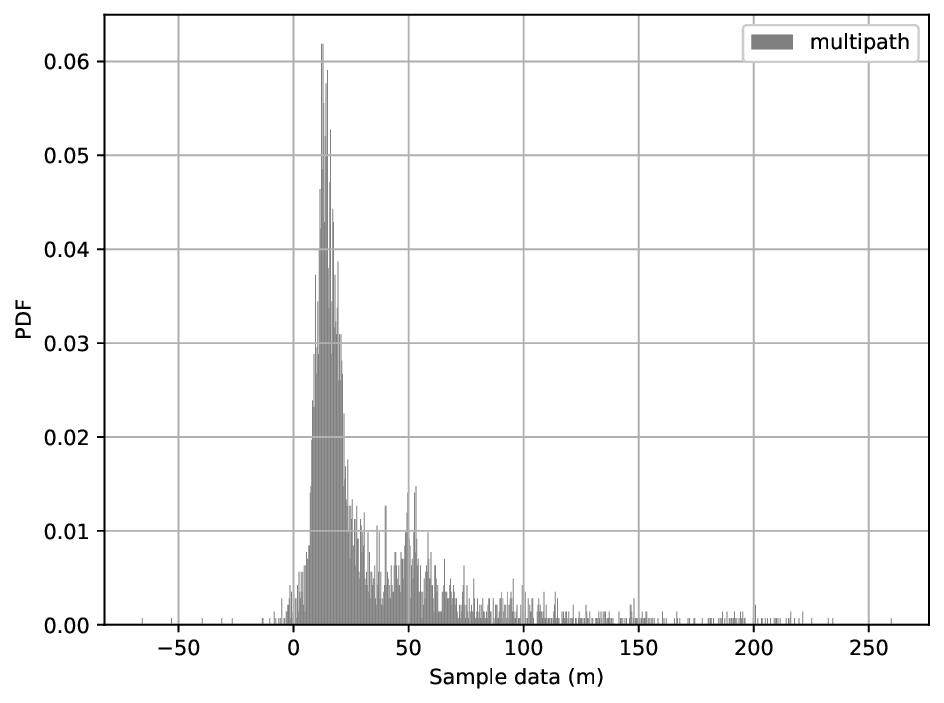}}
    \hfill
    \subfloat[]{\includegraphics[width=0.3\textwidth]{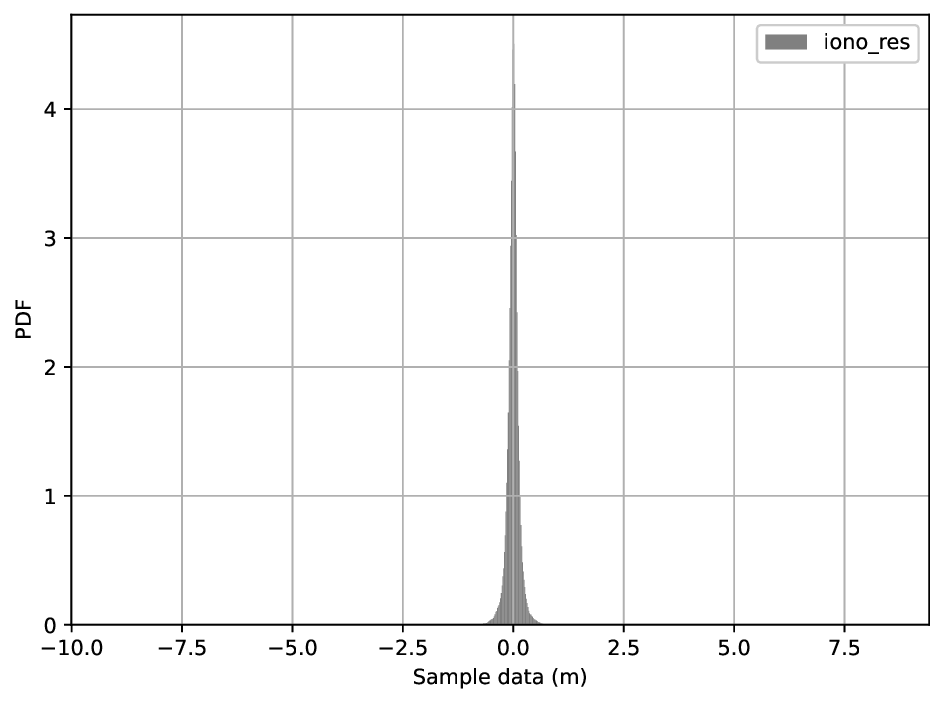}}
    \caption{Histograms of three representative real-world error distributions: 
    (a) ZWD residual error, (b) GNSS multipath error, and (c) ionospheric delay residual error.}
    \label{fig:real_data_distributions}
\end{figure*}

\section{Results and analysis\label{sec5}}
This section presents the results of simulations designed to compare the proposed method against traditional overbounding techniques. An ablation study is conducted to analyze the impact of key components, followed by empirical evaluations on three real-world datasets.
\subsection{Simulation Results}\label{sec5_1}
Fig.~\ref{fig:dist1_result} presents the \gls{PDF}, \gls{QQ} plot, and logarithmic-scale \gls{CDF} of the Type 1 distribution, along with the corresponding left bounds obtained using our proposed method (Case 2), paired overbounding (Paired OB), two-step Gaussian overbounding (Two-step OB), and quantile overbounding (Quantile OB).

In Fig.~\ref{fig:dist1_result}(a), the proposed method (blue line) closely aligns with the empirical distribution on the left side, demonstrating a noticeably tighter fit compared to the other methods. The two-step Gaussian overbounding method (green line) shows a looser approximation, while paired overbounding (orange line) and quantile overbounding (red line) produce distributions with significantly larger standard deviations. Fig.~\ref{fig:dist1_result}(b)–(c) further confirm that for the region where $\mathrm{CDF} < 0.5$, our method not only provides a closer approximation to the empirical distribution but also consistently stays above it, ensuring conservatism. The blue crosses represent the quantile points used during training. Notably, the quantile overbounding curve intersects the empirical distribution, indicating its failure to maintain conservatism—primarily due to its reliance on a single quantile estimate and the assumption of zero-mean Gaussianity, which is generally unsuitable for this type of distribution. Additionally, in the \gls{QQ} plot (Fig.~\ref{fig:dist1_result}(b)), the non-linear behavior observed in the tail regions reflects the application of the excess mass technique: the quantile at level $\tau$ is adjusted to $\tfrac{\tau}{1+\epsilon}$, causing deviation from a straight line. The area between the method curve and the sample distribution curve for quantiles below zero, partially reflects the Wasserstein distance $W_L$. In this aspect, our method shows the smallest area, suggesting the tightest fit among all approaches.

\begin{figure*}[!htbp]
    \centering
    \subfloat[]{\includegraphics[width=0.33\textwidth]{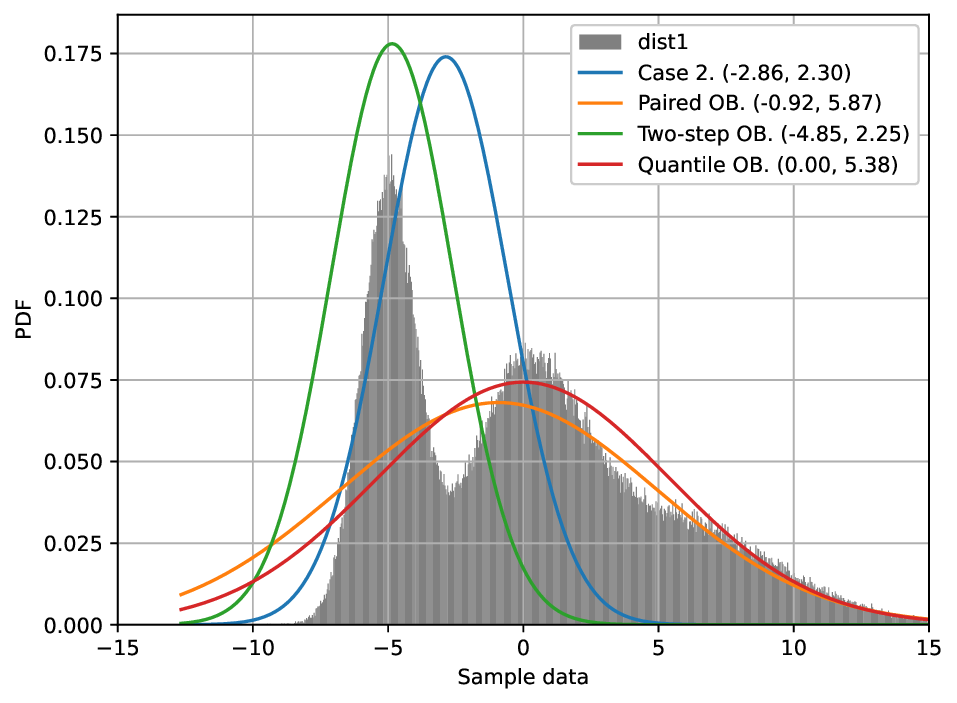}}
    \hfill
    \subfloat[]{\includegraphics[width=0.33\textwidth]{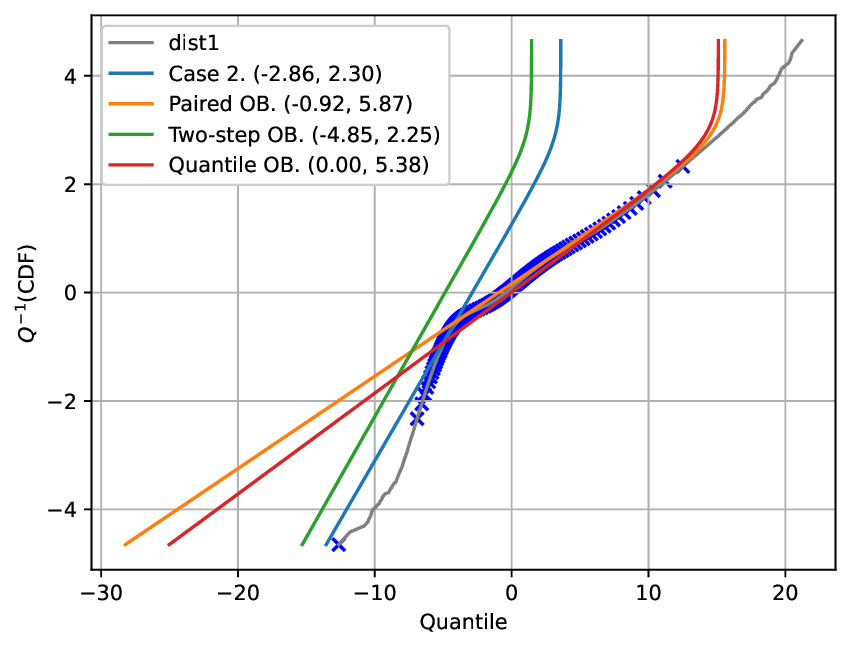}}
    \hfill
    \subfloat[]{\includegraphics[width=0.33\textwidth]{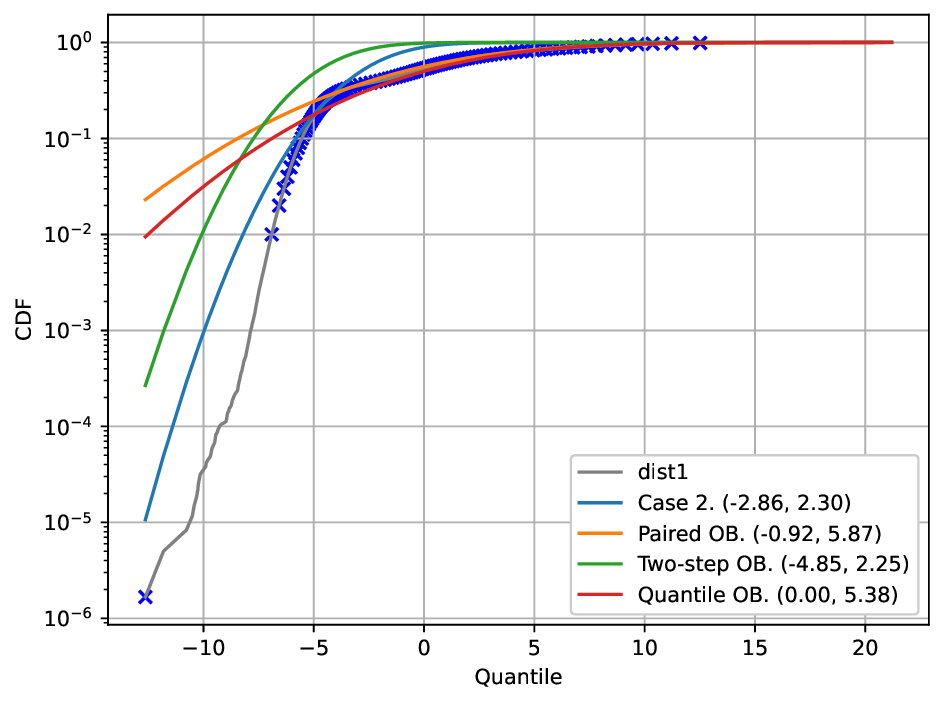}}
    \caption{Visualization of left bounds for Type 1 distribution using various methods: 
    (a) PDF comparison, (b) QQ plot, and (c) CDF in log scale.}
    \label{fig:dist1_result}
\end{figure*}

Table~\ref{tab:dist1_result} summarizes quantitative results. "OB" denotes whether overbounding is achieved; if not, it specifies which part violates the conservative requirement. When overbounding is successful—where the quantile overbounding method fails—our proposed method consistently outperforms both the two-step and paired overbounding approaches across all evaluation metrics.
Specifically, considering the known distribution, the ground truth interval for $PL^1$ is [-7.856, 15.986], with its 10-fold self-convolution $PL^{10}$ within [-4.192, 5.065]. Compared to the two-step overbounding method ($PL^1$: [-11.798, 18.999], $PL^{10}$: [-7.054, 8.86]) and paired overbounding ($PL^1$: [-19.074, 17.424], $PL^{10}$: [-6.671, 8.229]), our method significantly reduces the protection level range.
For $PL^1$, our method narrows the interval to [-11.069, 17.490], reducing the range length by approximately 7.3\% compared to two-step and 21.7\% compared to paired overbounding. For $PL^{10}$, our interval of [-5.242, 7.863] achieves approximately 18\% reduction over two-step and 12\% reduction over paired overbounding.
Furthermore, compared to the ground truth, our method substantially reduces deviation: the total $PL^1$ deviation drops from 7.0 (two-step) and 12.6 (paired) to 4.71—over a 33\% improvement. For $PL^{10}$, deviation improves from 6.6 (two-step) and 5.6 (paired) to 3.8, representing approximately a 32\% reduction.
Regarding Wasserstein distance and the average overbounding factor $\mathcal{K}$, our method achieves notable improvements. For the left bound, $W_L$ is reduced by 71\% and 58\% compared to two-step and paired overbounding, respectively. The overbounding factor $\mathcal{K}_L$ is reduced by 65\% and 99\% compared to two-step and paired overbounding. For the right bound, $W_R$ is reduced by 24\% and 11\%, and $\mathcal{K}_R$ decreases by 58\% and 5\% relative to the two baselines. These results demonstrate that our approach achieves a tighter overbounding distribution, aligning much closer to the true distribution than previous methods, while preserving the required conservatism. 

\begin{table*}[!htbp]
    \centering
    \caption{Results on Type 1 Distribution}
    \label{tab:dist1_result}
    \resizebox{\linewidth}{!}{
\begin{tabular}{l|l l l l l l l|l l l l l l l}
\hline
Method                & $\mu_L$ & $\sigma_L$ & OB & $PL_L^1$ & $PL_L^{10}$ & $W_L$ & $\mathcal{K}_L$ & $\mu_R$ & $\sigma_R$ & OB & $PL_R^1$ & $PL_R^{10}$ & $W_R$ & $\mathcal{K}_R $\\ \hline
Paired OB   & -0.917  & 5.874      & True         & -19.074 & -6.671   & 91.689  & 277.874 & 3.963   & 4.355      & True         & 17.424  & 8.229    & 173.297 & 1.609  \\ 
Two-step OB & -4.853  & 2.247      & True         & -11.798 & -7.054   & 131.890 & 5.548   & 4.156   & 4.802      & True         & 18.999  & 8.860    & 202.263 & 3.667  \\ 
Quantile OB  & 0.000   & 5.381      & 0.19-0.84    & -16.631 & -5.271   & 67.336  & 113.807 & 0       & 2.969      & 0-0.4        & 9.179   & 2.909    & 88.247  & -0.450 \\ 
Case 2                & -2.539  & 2.760      & True         & -11.069 & -5.242   & 38.323  & 1.967   & 3.397   & 4.560      & True         & 17.490  & 7.863    & 153.816 & 1.528  \\ \hline
\end{tabular}
}
\end{table*}

Fig.~\ref{fig:dist2_result} presents the visualization results for the Type 2 distribution, including the \gls{PDF}, \gls{QQ} plot, and logarithmic \gls{CDF} of the left bounds generated by various overbounding methods. It is evident that our proposed method produces a distribution that more closely aligns with the actual one. As shown in Fig.~\ref{fig:dist2_result}(c), the two-step Gaussian overbounding method deviates significantly from the true distribution, especially in the left tail, while our method (blue line) aligns more closely than the paired overbounding method (orange line). Notably, in this distribution, paired overbounding shows performance comparable to our method. This is primarily because both methods share similar constraint formulations: paired overbounding initializes its search from the distribution’s mean, and the first parameter set that meets conservatism criteria tends to be near the optimized parameters obtained through our framework.
\begin{figure*}[!htbp]
    \centering
    \subfloat[]{\includegraphics[width=0.33\textwidth]{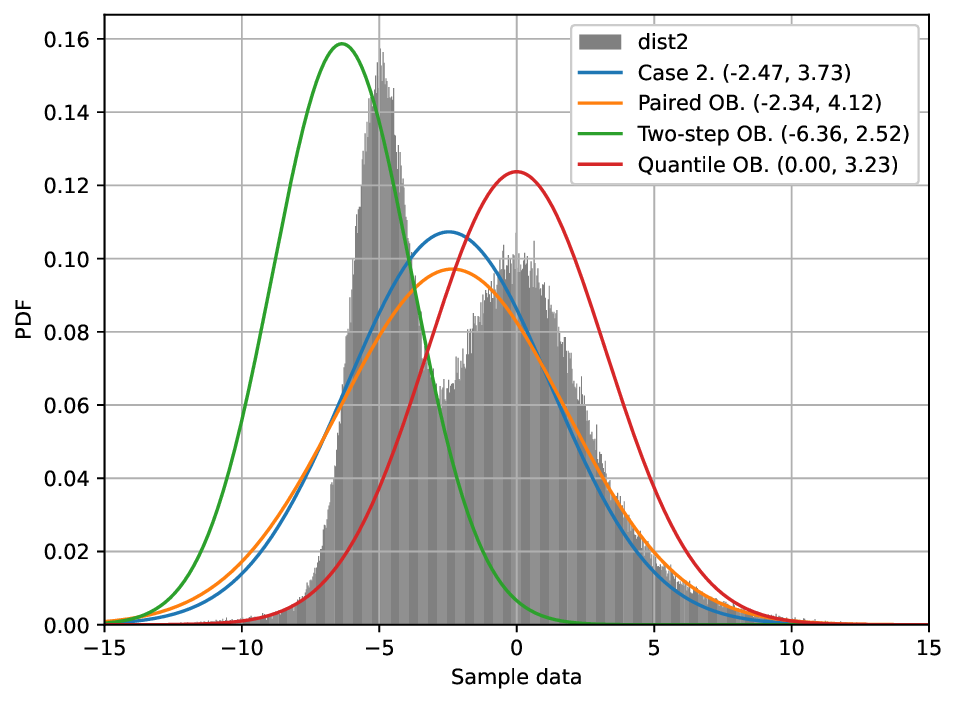}}
    \hfill
    \subfloat[]{\includegraphics[width=0.33\textwidth]{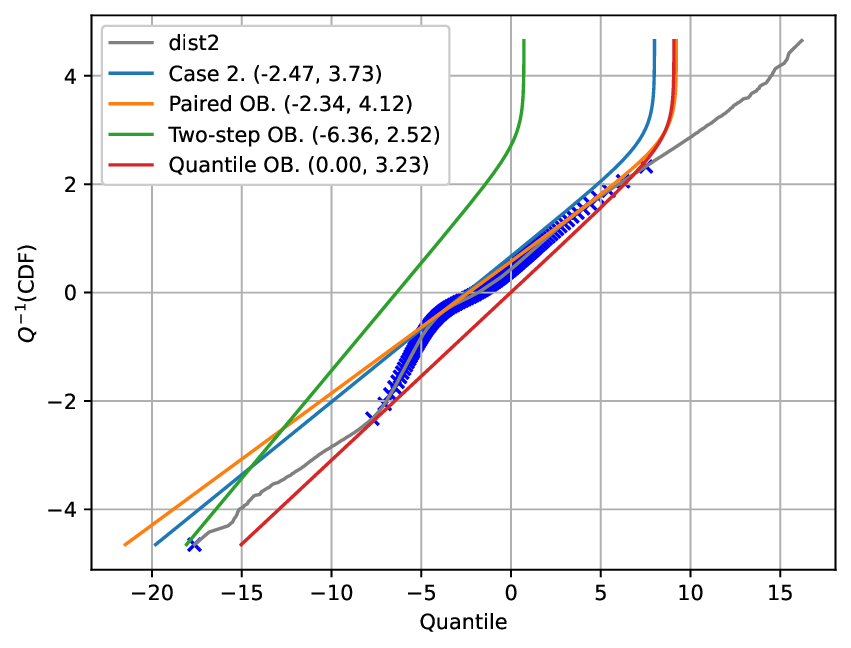}}
    \hfill
    \subfloat[]{\includegraphics[width=0.33\textwidth]{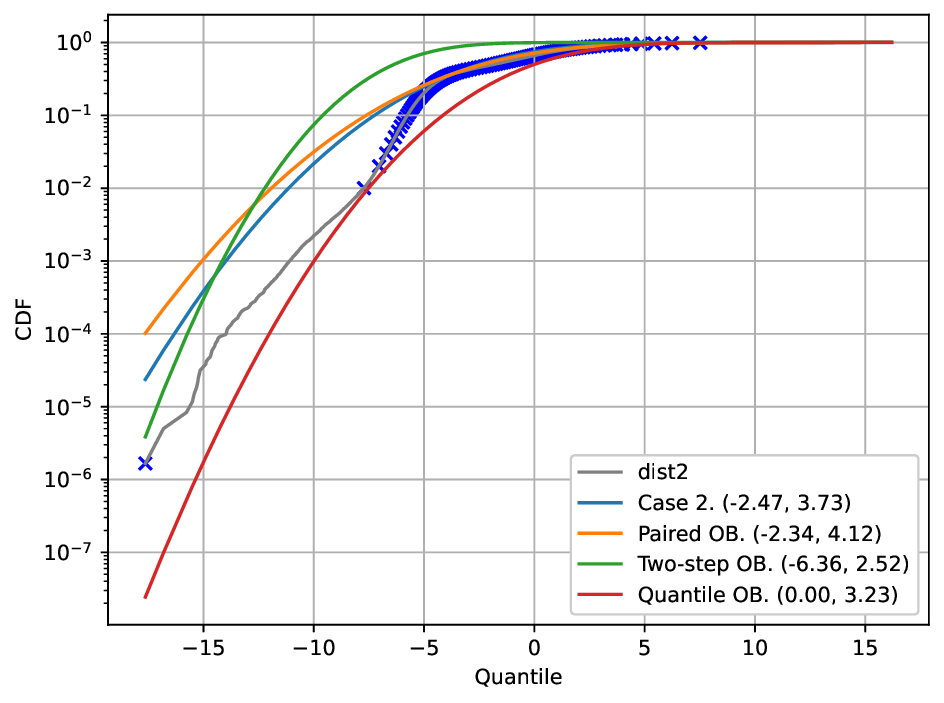}}
    \caption{Visualization of left bounds for Type 2 distribution using various methods: 
    (a) PDF comparison, (b) QQ plot, and (c) CDF in log scale.}
    \label{fig:dist2_result}
\end{figure*}

\begin{table*}[!htbp]
    \centering
    \caption{Results on Type 2 Distribution}
    \label{tab:dist2_result}
    \resizebox{\linewidth}{!}{
\begin{tabular}{l|l l l l l l l|l l l l l l l}
\hline
Method                & $\mu_L$ & $\sigma_L$ & OB & $PL_L^1$ & $PL_L^{10}$ & $W_L$ & $\mathcal{K}_L$ & $\mu_R$ & $\sigma_R$ & OB & $PL_R^1$ & $PL_R^{10}$ & $W_R$ & $\mathcal{K}_R $\\ \hline
Paired OB   & -2.344  & 4.117      & True         & -15.069 & -6.377   & 51.626  & 1.961  & 0.996   & 3.904      & True         & 13.063  & 4.820    & 141.365 & 1.930 \\ 
Two-step OB & -6.360  & 2.521      & True         & -14.150 & -8.829   & 183.499 & 3.812  & 0.280   & 4.140      & True         & 13.075  & 4.335    & 115.705 & 1.732 \\ 
Quantile OB  & 0.000   & 3.232      & 0-0.98       & -9.991  & -3.166   & 103.759 & -0.617 & 0.000   & 3.319      & 0-2e-6        & 10.259  & 3.251    & 67.828  & 0.585 \\ 
Case 2                & -2.471  & 3.727      & True         & -13.991 & -6.122   & 41.137  & 0.904  & 0.414   & 3.542      & True         & 11.363  & 3.884    & 96.644  & 0.931 \\ \hline
\end{tabular}
}
\end{table*}

Table~\ref{tab:dist2_result} quantitatively supports these observations. When overbounding is successful—meaning the quantile overbounding method fails in the left tail—our approach consistently outperforms both two-step and paired overbounding methods across multiple metrics. Specifically, for the left bound, $W_L$ is reduced by 77\% and 20\%, and the average overbounding factor $\mathcal{K}_L$ improves by 76\% and 54\% compared to the two-step and paired methods, respectively. For the right bound, $W_R$ is reduced by 16\% and 31\%, while $\mathcal{K}_R$ decreases by 46\% and 51\%.
In terms of protection levels, the $PL^1$ range ($PL^1_R - PL^1_L$) improves by 7\% and 10\%, and the $PL^{10}$ range improves by 24\% and 10.7\% compared to two-step and paired overbounding, respectively. Furthermore, when compared to the ground truth $PL^1$ and $PL^{10}$ ranges of [-11.093, 10.986] and [-4.815, 1.944], our method reduces deviation substantially: $PL^1$ total deviation drops from 5.15 (two-step) and 6.05 (paired) to 3.3—a 36\% improvement. Similarly, $PL^{10}$ deviation improves from 6.4 (two-step) and 4.4 (paired) to 3.2, achieving a 27\% reduction.
These results confirm that our method produces tighter bounds while rigorously maintaining conservatism. Notably, in negatively biased, multi-modal distributions such as Type 2, the two-step overbounding method performs worse than paired overbounding. In contrast, our method consistently outperforms both, offering tighter and more reliable overbounding—further highlighting its practical advantage in complex real-world scenarios.

Similarly, Fig.~\ref{fig:dist3_result} shows the left-bound results for the Type 3 distribution. Across all visualizations, our method produces a distribution that more closely aligns with the true distribution. In particular, subfigure (c) demonstrates that the two-step Gaussian overbounding method performs substantially worse than both our method and paired overbounding, especially in the left tail. While paired overbounding shows relatively similar performance to our approach, our method (blue line) remains consistently closer to the true distribution than paired overbounding (orange line), particularly on the left side.
\begin{figure*}[!htbp]
    \centering
    \subfloat[]{\includegraphics[width=0.33\textwidth]{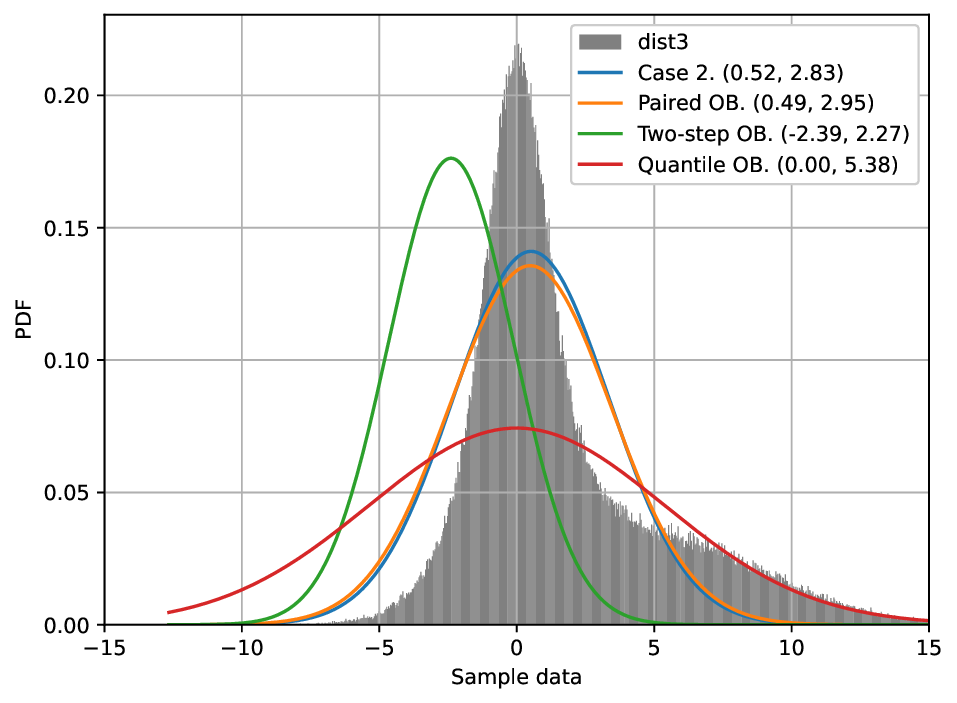}}
    \hfill
    \subfloat[]{\includegraphics[width=0.33\textwidth]{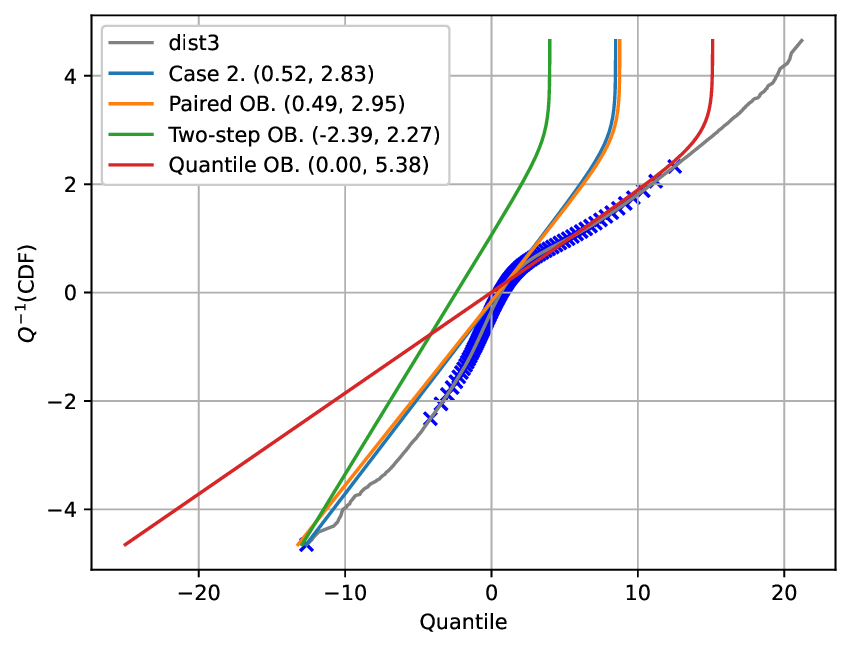}}
    \hfill
    \subfloat[]{\includegraphics[width=0.33\textwidth]{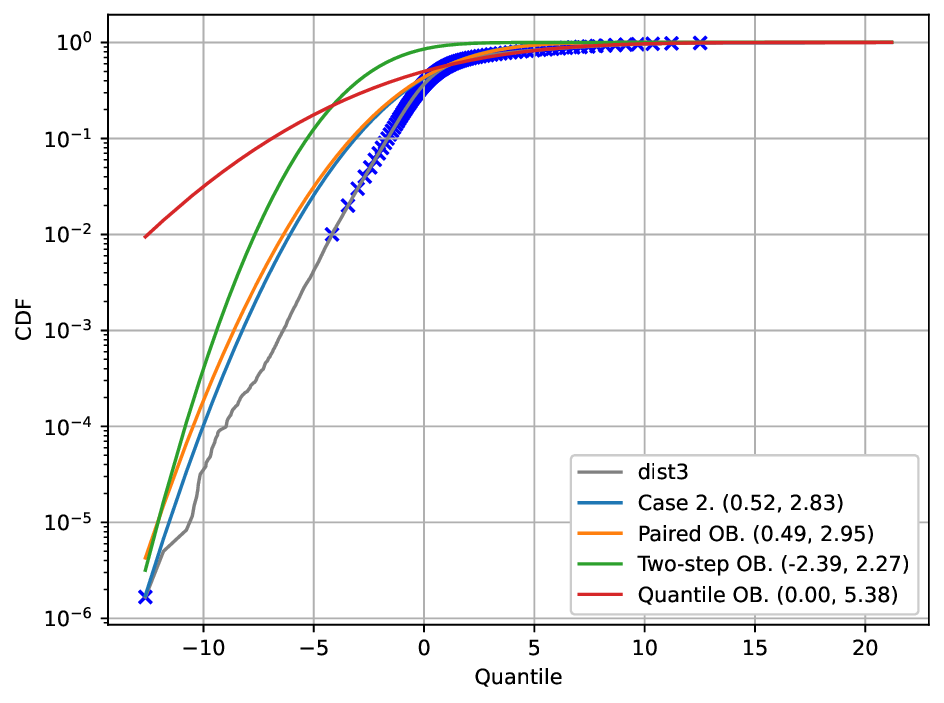}}
    \caption{Visualization of left bounds for Type 3 distribution using various methods:
    (a) PDF comparison, (b) QQ plot, and (c) CDF (log scale).}
    \label{fig:dist3_result}
\end{figure*}

\begin{table*}[!htbp]
    \centering
    \caption{Results on Type 3 Distribution}
    \label{tab:dist3_result}
    \resizebox{\linewidth}{!}{
\begin{tabular}{l|l l l l l l l|l l l l l l l}
\hline
Method                & $\mu_L$ & $\sigma_L$ & OB & $PL_L^1$ & $PL_L^{10}$ & $W_L$ & $\mathcal{K}_L$ & $\mu_R$ & $\sigma_R$ & OB & $PL_R^1$ & $PL_R^{10}$ & $W_R$ & $\mathcal{K}_R $\\ \hline
Paired OB   & 0.490   & 2.948      & True         & -8.623  & -2.398   & 50.333  & 0.833  & 5.170   & 4.087      & True         & 17.802 & 9.173   & 212.630 & 2.169 \\ 
Two-step OB & -2.392  & 2.269      & True         & -9.405  & -4.614   & 165.082 & 3.620  & 5.227   & 4.179      & True         & 18.145 & 9.321   & 219.485 & 2.487 \\ 
Quantile OB  & 0.000   & 5.381      & 0.59-0.84    & -16.631 & -5.271   & 178.401 & 11.624 & 0.000   & 1.794      & 0-0.98       & 5.546  & 1.758   & 31.598  & 0.471 \\ 
Case 2                & 0.522   & 2.835      & True         & -8.240  & -2.255   & 43.808  & 0.713  & 4.759& 3.682      & True         & 16.141 & 8.366   & 174.631 & 1.402 \\ \hline
\end{tabular}
}
\end{table*}

This observation is quantitatively supported by the results in Table~\ref{tab:dist3_result}. When overbounding is successful, our method exhibits consistent improvements across all evaluation metrics. For the left bound, the Wasserstein distance $W_L$ is reduced by 73\% and 13\%, and the average overbounding factor $\mathcal{K}_L$ is improved by 80\% and 14\%, compared to the two-step and paired methods, respectively. For the right bound, $W_R$ decreases by 20\% and 18\%, and $\mathcal{K}_R$ improves by 44\% and 35\%, respectively. In terms of protection levels, the range of $PL^1$ improves by 11.5\% and 8\%, while the range of $PL^{10}$ improves by 24\% and 8\%, relative to the two-step and paired methods.
Compared to the ground truth $PL^1$ and $PL^{10}$ ranges of [–6.388, 15.986] and [–1.188, 5.723], our method reduces total deviation notably: $PL^1$ deviation drops from 5.17 (two-step) and 4.1 (paired) to just 2, representing more than a 51\% improvement. $PL^{10}$ deviation is reduced from 7 (two-step) and 4.7 (paired) to 3.7, achieving approximately a 21\% reduction.
These results highlight the superior fidelity of the proposed method in modeling heavy-tailed distributions. 

\subsection{Ablation Study}\label{sec5_2}
\paragraph{Comparison of Case 1, Case 2, and Case 3}
To validate the theoretical derivation under different parameter constraints, we evaluate three cases on the left tail of the Type 1 distribution. The results are shown in Table~\ref{tab:ablation_cases}. It can be observed that Case 1 and Case 3 fail to fully overbound the true distribution, while Case 2 successfully satisfies the overbounding condition. Notably, Case 3 performs better than Case 1, with only a small quantile range violating the overbounding requirement. Case 2, with both shape ($\sigma$) and scale ($k$) shared across quantiles, achieves full conservatism while maintaining competitive performance metrics.
\begin{table}[!htbp]
    \centering
    \caption{Comparison of different parameter constraint cases on Type 1 distribution (left bound)}
    \label{tab:ablation_cases}
    \resizebox{\columnwidth}{!}{
\begin{tabular}{l|lllllll}
\hline
Method                & $\mu_L$ & $\sigma_L$ & OB & $PL_L^1$ & $PL_L^{10}$ & $W_L$ & $\mathcal{K}_L$ \\ \hline
case1  & -0.571 & 496.679 & 0.51-0.99    & -1535.790 & -487.097 & 20790.330 & 102.284 \\ 
case2  & -2.539  & 2.760      & True         & -11.069 & -5.242   & 38.323  & 1.967   \\ 
case3  & -1.087 & 5.444   & 0.32-0.33    & -17.915   & -6.420   & 81.645    & 204.748 \\ \hline
\end{tabular}
}
\end{table}

\paragraph{Effect of Wasserstein Distance Penalty Term}
To assess the effect of the Wasserstein distance penalty $\mathcal{J}_p$, we compare results with and without its inclusion in the loss function. Table~\ref{tab:ablation_penalty} shows that adding $\mathcal{J}_p$ significantly improves key metrics by 40\%, 83\%, and 72\% for $PL_L^{10}$, $W_L$, and $\mathcal{K}_L$, respectively, while maintaining a minimal performance loss (-0.7\%) in $PL_L^1$.
\begin{table}[!htbp]
    \centering
    \caption{Effect of including Wasserstein distance penalty on Type 1 distribution (left bound)}
    \label{tab:ablation_penalty}
    \resizebox{\columnwidth}{!}{
\begin{tabular}{l|lllllll}
\hline
Method                & $\mu_L$ & $\sigma_L$ & OB & $PL_L^1$ & $PL_L^{10}$ & $W_L$ & $\mathcal{K}_L$ \\ \hline
With $\mathcal{J}_p$  & -2.539  & 2.760      & True         & -11.069 & -5.242   & 38.323  & 1.967 \\ 
Without $\mathcal{J}_p$ & -7.678  & 1.071      & True         & -10.987 & -8.726   & 222.364 & 7.173 \\ \hline
\end{tabular}}
\end{table}

\paragraph{Impact of Quantile Weights $w_\tau$}
We examine the impact of the weighting scheme $w_\tau$ by comparing our designed weighting with uniform weighting. Table~\ref{tab:ablation_weights} shows that using the proposed $w_\tau$ ensures conservatism across the quantile range, particularly in the tail. In contrast, uniform weighting leads to underbounding in extreme quantiles (notably for $\tau < 10^{-6}$), indicating insufficient conservatism.
\begin{table}[!htbp]
    \centering
    \caption{Impact of quantile weighting scheme $w_\tau$}
    \label{tab:ablation_weights}
    \resizebox{\columnwidth}{!}{
\begin{tabular}{l|lllllll}
\hline
Method                & $\mu_L$ & $\sigma_L$ & OB & $PL_L^1$ & $PL_L^{10}$ & $W_L$ & $\mathcal{K}_L$ \\ \hline
With $w_\tau$   & -2.539  & 2.760      & True         & -11.069 & -5.242   & 38.323 & 1.967 \\ 
Without $w_\tau$& -3.187  & 1.907      & 0-1e-6    & -9.082  & -5.055   & 35.093 & 0.363 \\ \hline
\end{tabular}
}
\end{table}

\paragraph{Effect of Scaling Factor $t$}
To verify the correctness of our theoretical derivation for the scaling factor $t$, we evaluate different values of $t$ on the right tail of the type 1 distribution. As shown in Table~\ref{tab:ablation_t}, only the value derived from our theory ($t=1-200\lambda$) maintains full conservatism, while the other settings fail to meet the overbounding condition in the extreme tail.
\begin{table}[!htbp]
    \centering
    \caption{Effect of scaling factor $t$ on type 1 distribution (right side)}
    \label{tab:ablation_t}
    \resizebox{\columnwidth}{!}{
\begin{tabular}{l|lllllll}
\hline
Method                & $\mu_R$ & $\sigma_R$ & OB & $PL_R^1$ & $PL_R^{10}$ & $W_R$ & $\mathcal{K}_R$ \\ \hline
$t=1$  & -2.489 & 2.835   & 0-0.01       & -15.683 & -6.919 & 109.015 & 0.711 \\ 
$t=1-50\lambda$  & -2.857 & 4.150   & 0-0.01       & -15.684 & -6.922 & 109.164 & 0.713 \\ 
$t=1-200\lambda$ & -3.397 & 4.560   & True         & -17.490 & -7.863 & 153.816 & 1.528 \\ \hline
\end{tabular}
}
\end{table}

\paragraph{Effect of Learnable Shape Parameter $k$}
Finally, we investigate the role of a learnable shape parameter $k$ by comparing it to a fixed $k=1$ setting. Table~\ref{tab:ablation_k} demonstrates that enabling $k$ to be learnable significantly improves performance across all metrics. This is because learnable $k$ provides additional flexibility, enabling better optimization behavior and stability. Although using a very large batch size (e.g., the entire dataset) can mitigate this issue when $k$ is fixed, this is impractical for large-scale datasets.
\begin{table}[!htbp]
    \centering
    \caption{Effect of learnable vs fixed shape parameter $k$}
    \label{tab:ablation_k}
    \resizebox{\columnwidth}{!}{
\begin{tabular}{l|lllllll}
\hline
Method                & $\mu_L$ & $\sigma_L$ & OB & $PL_L^1$ & $PL_L^{10}$ & $W_L$ & $\mathcal{K}_L$ \\ \hline
learnable $k$    & -2.539 & 2.760   & True         & -11.069 & -5.242 & 38.323 & 1.967   \\ 
fixed $k$ & -0.917 & 5.874   & True         & -19.074 & -6.671 & 90.855 & 140.504 \\ \hline
\end{tabular}
}
\end{table}

\subsection{Empirical Experiments\label{sec5_3}}
We further evaluated the practical applicability of our proposed method across three real-world scenarios, involving troposphere residual errors, multipath errors, and ionosphere residual errors.

Fig.~\ref{fig:zwd_result} illustrates the \gls{PDF}, \gls{QQ} plot, and logarithmic-scale \gls{CDF} of the \gls{ZWD} residual errors for left bounds. As indicated in Fig.~\ref{fig:zwd_result}(a), the empirical residual errors (gray line) exhibit a symmetric and unimodal distribution. Although quantile overbounding (purple line) closely approximates the empirical distribution, it fails to maintain conservatism. In contrast, Case 2 (blue line) and paired overbounding (green line) provide slight overbounds, closely aligning with each other. Meanwhile, two-step Gaussian overbounding (red line) and our half-constrained variant of Case 2 (orange line) generate tighter bounds, with the half-constrained version offering the closest conformity to the empirical distribution from Fig.~\ref{fig:zwd_result}(b).
\begin{figure*}[!htbp]
    \centering
    \subfloat[]{\includegraphics[width=0.33\textwidth]{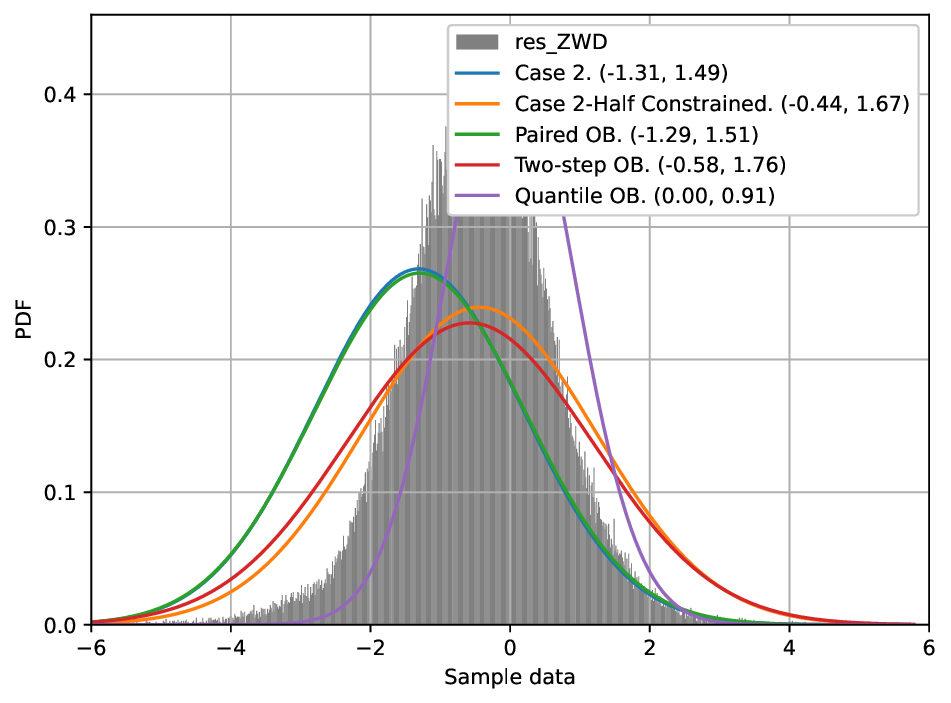}}
    \hfill
    \subfloat[]{\includegraphics[width=0.33\textwidth]{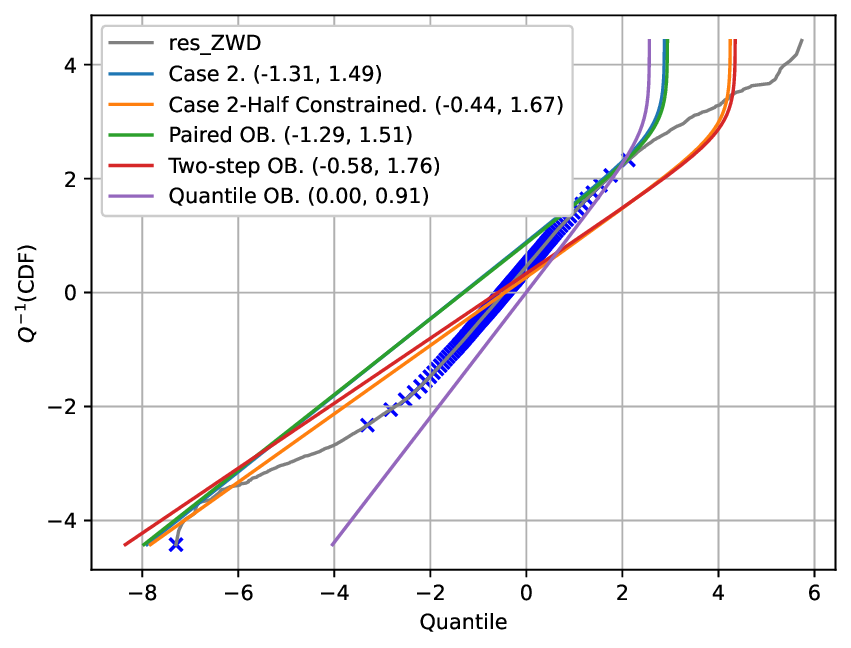}}
    \hfill
    \subfloat[]{\includegraphics[width=0.33\textwidth]{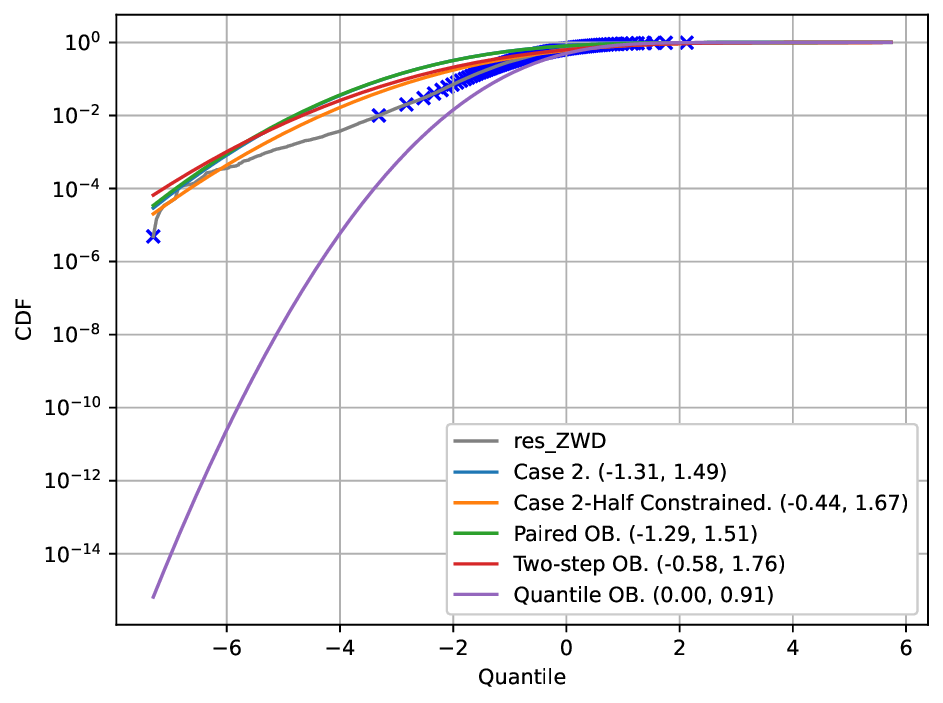}}
    \caption{Visualization of left bounds for \gls{ZWD} residual errors using various methods: 
    (a) PDF comparison, (b) QQ plot, and (c) CDF (log scale).}
    \label{fig:zwd_result}
\end{figure*}

\begin{table*}[!htbp]
    \centering
    \caption{Results on \gls{ZWD} residual errors}
    \label{tab:zwd_result}
    \resizebox{\linewidth}{!}{
\begin{tabular}{l|lllllll|lllllll}
\hline
Method                & $\mu_L$ & $\sigma_L$ & OB & $PL_L^1$ & $PL_L^{10}$ & $W_L$ & $\mathcal{K}_L$ & $\mu_R$ & $\sigma_R$ & OB & $PL_R^1$ & $PL_R^{10}$ & $W_R$ & $\mathcal{K}_R $\\ \hline
Paired OB                   & -1.295 & 1.508   & True         & -5.956 & -2.772 & 59.098 & 1.914  & 0.055  & 1.382   & True         & 4.327 & 1.409  & 39.500 & 1.280 \\ 
Two-step OB                 & -0.582 & 1.757   & True         & -6.012 & -2.303 & 34.131 & 1.156  & -0.430 & 1.532   & True         & 4.306 & 1.071  & 21.695 & 0.727 \\ 
Quantile OB                 & 0.000  & 0.913   & 0-0.98       & -2.822 & -0.894 & 31.199 & -0.616 & 0.000  & 1.424   & 0.9-0.98     & 4.400 & 1.394& 38.524 & 1.262 \\ 
Case 2                  & -1.308 & 1.490   & True         & -5.916 & -2.769 & 59.030 & 1.904  & -0.042 & 1.315   & True         & 4.024 & 1.246  & 31.769 & 0.981 \\ 
Case 2 half constraints & -0.442 & 1.670   & True         & -5.605 & -2.078 & 23.437 & 0.705  & -0.477 & 1.475   & True         & 4.082 & 0.967  & 16.864 & 0.534 \\ \hline
\end{tabular}
}
\end{table*}

Table~\ref{tab:zwd_result} quantitatively confirms these observations. The quantile overbounding method does not achieve conservative bounds, while Case 2 shows marginal improvements over paired overbounding across all metrics, although it slightly lags behind two-step overbounding in some cases. By contrast, the Case 2 half-constrained variant consistently outperforms both two-step and paired overbounding across all evaluation metrics.
Specifically, Case 2 improves $W_L$ and $\mathcal{K}_L$ by 0.1\% and 0.5\% over paired overbounding on the left bound, with more notable gains on the right bound: 20\% and 23\% improvements, respectively. It also reduces the range of $PL^1$ by 3.3\% and 3.7\% compared to paired and two-step overbounding, respectively. For $PL^{10}$, Case 2 shows a 4\% improvement over paired overbounding but performs slightly worse than two-step overbounding.
The Case 2 half-constrained variant shows more substantial benefits. It improves $W_L$ and $\mathcal{K}_L$ by 31\% and 39\% over two-step overbounding for the left bound, while delivering similar improvements of 22\% and 26\% on the right bound. In terms of protection level range, it reduces the range of $PL^1$ and $PL^{10}$ by 6\% and 10\% compared to two-step overbounding.
When compared to the ground truth $PL^1$ and $PL^{10}$ ranges ([–5.252, 3.472] and [–1.610, 0.600], respectively), the Case 2 half-constrained variant further reduces deviation: $PL^1$ total deviation decreases from 1.59 (two-step) and 1.56 (paired) to just 0.96—a reduction of more than 38\%. Similarly, $PL^{10}$ deviation decreases from 1.16 (two-step) and 1.97 (paired) to 0.83, achieving a reduction of over 28\%.
These results suggest that for symmetric and unimodal distributions, applying a symmetric (half-constraint) overbounding strategy is both more appropriate and tighter than enforcing strict, full-range conservatism. 

To visualize the conditional distribution of multipath errors in a unified manner, we normalized the sample distributions using each method’s predicted parameters. Fig.~\ref{fig:mp_result} compares the resulting left-tail normalized distributions across different methods against the standard Gaussian distribution, which represents the ideal overbounding reference. As shown, all normalized sample distribution curves lie below the standard normal curve, indicating that all methods successfully maintain conservatism.
However, because the two-step (green line) and paired overbounding (yellow line) methods are applied to the unconditional error distribution without considering feature-dependent variability, their normalized results appear nearly identical and exhibit noticeable deviations from the Gaussian reference. In contrast, our method (blue line) achieves a much closer match to the standard Gaussian distribution, particularly evident in the logarithmic-scale \gls{CDF} plot (Fig.~\ref{fig:mp_result}(c)). This suggests that by leveraging conditional information, our approach produces tighter, less conservative bounds while remaining conservative in this empirical evaluation.

\begin{figure*}[!htbp]
    \centering
    \subfloat[]{\includegraphics[width = 0.33\textwidth]{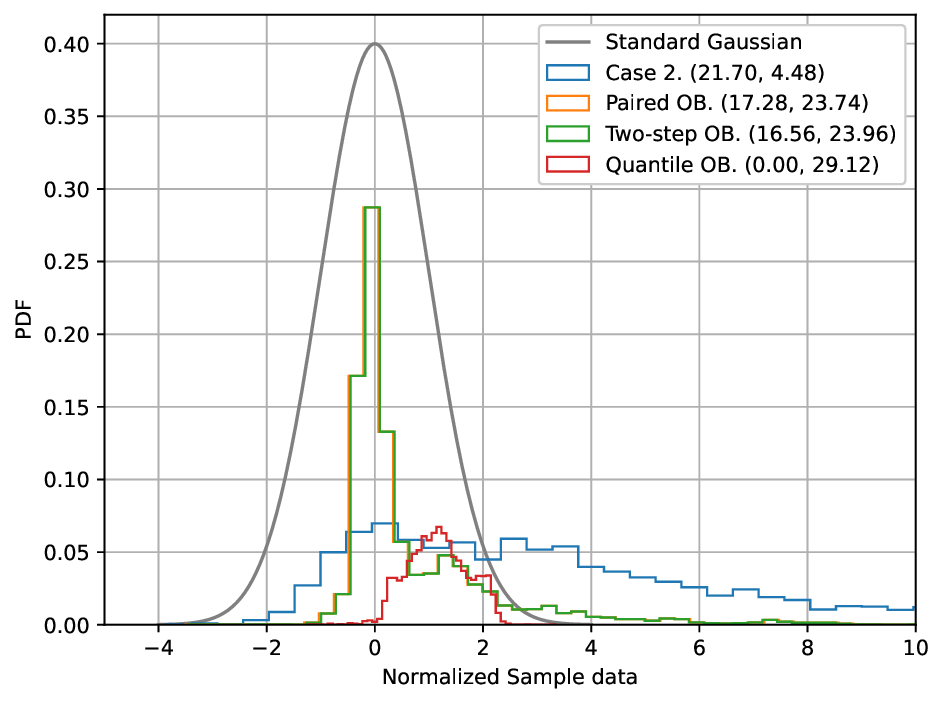}}
    \hfill
    \subfloat[]{\includegraphics[width = 0.33\textwidth]{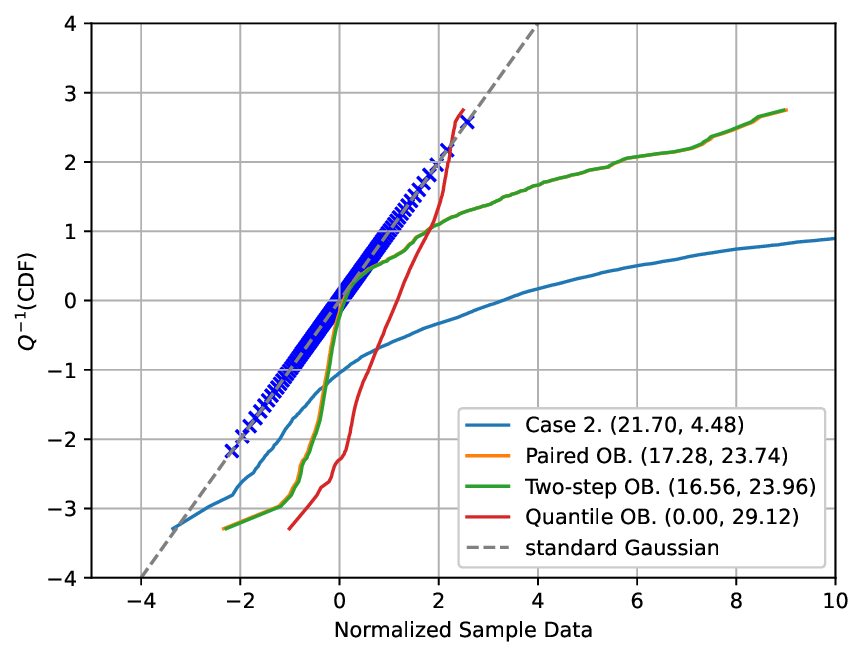}} 
    \hfill
    \subfloat[]{\includegraphics[width = 0.33\textwidth]{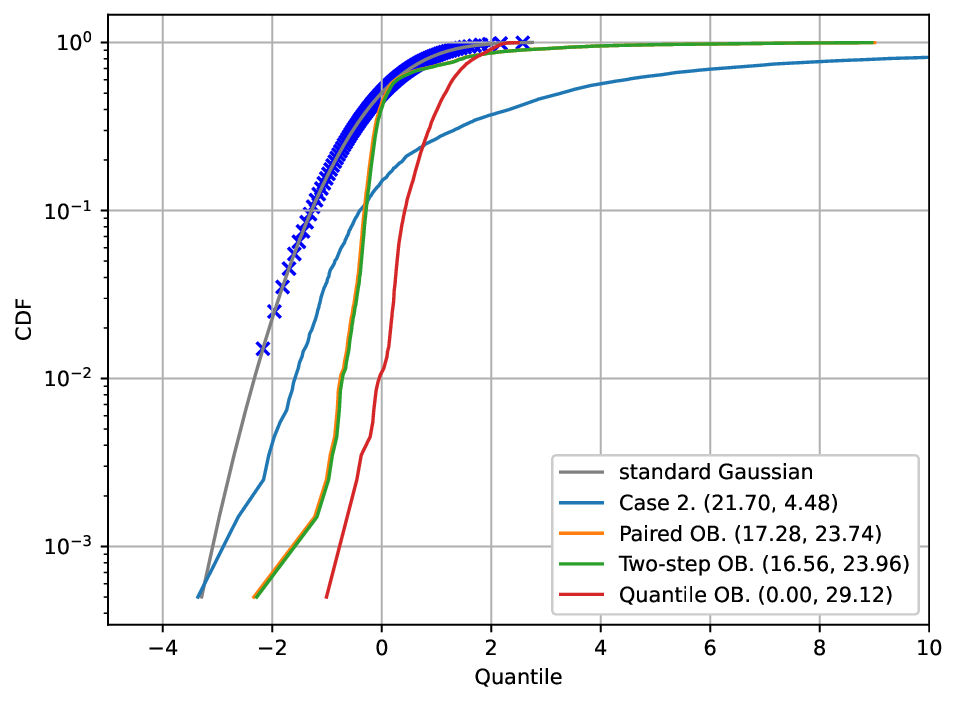}}
    \caption{Visualization of left bounds for multipath errors using various methods: 
    (a) \gls{PDF} comparison, 
    (b) \gls{QQ} plot, and
    (c) \gls{CDF} in logarithmic scale.}
    \label{fig:mp_result}
\end{figure*}

\begin{table*}[!htbp]
    \centering
    \caption{Results on multipath errors}
    \label{tab:mp_result}
    \resizebox{\linewidth}{!}{
\begin{tabular}{l|llllll|llllll}
\hline
Method                & $\mu_L$ & $\sigma_L$ & OB & $PL_L^1$ & $PL_L^{10}$ & $\mathcal{K}_L$ & $\mu_R$ & $\sigma_R$ & OB & $PL_R^1$ & $PL_R^{10}$ & $\mathcal{K}_R $\\ \hline

Paired OB   & 17.281 & 23.738  & True         & -56.092  & -5.971  & 2.610 & 106.311 & 43.823 & True   & 241.767 & 149.239 & 3.466  \\ 
Two-step OB & 16.562 & 23.958  & True         & -57.490  & -6.906  & 2.748 & 112.349 & 47.090 & True   & 257.902 & 158.476 & 3.754  \\ 
Quantile OB & 0.000  & 29.120  & True         & -106.236 & -33.667 & 5.703 & 0.000   & 0.368  & 0-0.98 & 1.139  & 0.361  & -1.000 \\ 
Case 2      & 21.702 & 4.482   & True         & 7.848    & 17.311  & 2.150 & 46.630  & 4.963  & True   & 61.971  & 51.492  & 2.278  \\ \hline
\end{tabular}
}
\end{table*}
Quantitative results summarized in Table~\ref{tab:mp_result} further illustrate the significant improvements offered by our proposed approach, particularly in protection-level performance. Our method substantially reduces the $PL^1$ range ([7.848, 61.971]) compared to two-step overbounding ([-57.49, 257.902]) and paired overbounding ([-56.092, 241.767]), corresponding to reductions of 83\% and 81.5\%, respectively. Similarly, the range of $PL^{10}$ ([17.311, 51.492]) exhibits reductions of 79\% (vs. two-step) and 78\% (vs. paired). Additionally, the average $\sigma$ is reduced by an order of magnitude, from tens down to approximately four, with notable improvements in the average overbounding factor $\mathcal{K}$ (22\% left bound and 39\% right bound vs. two-step; 18\% left bound and 34\% right bound vs. paired). Our method also clearly outperforms quantile overbounding as proposed in~\cite{liu2024overbounding}, reinforcing the method's promise for conditional error distributions.

Fig.~\ref{fig:iono_result} visualizes left side results for ionosphere residual errors using three variants of our method—Case 2 (full constraints using features \gls{Rfit} and \gls{RCM}), Case 2-Half Constrained (symmetric constraints with \gls{Rfit} and \gls{RCM}), and Case 2-Half+Elev (symmetric constraints with \gls{Rfit}, \gls{RCM}, and elevation angle)—alongside traditional methods including \gls{GIVE}, paired, and two-step Gaussian overbounding. Our methods consistently exhibit a more Gaussian-like shape compared to \gls{GIVE} (red line), paired (purple line), and two-step (brown line) methods. The normalized distributions under GIVE resemble a Laplace distribution, while paired and two-step Gaussian methods approximate a Dirac-like function due to using fixed parameters across the entire dataset. Fig.~\ref{fig:iono_result}(b) further confirms these differences, with \gls{QQ} plots for paired and two-step Gaussian methods appearing nearly vertical, indicating substantial deviations from Gaussian normality. In contrast, our method’s variants produce \gls{QQ} plots whose central regions closely parallel the normal distribution line, demonstrating better shape preservation and conditional conservatism.
\begin{figure*}[!htbp]
    \centering
    \subfloat[]{\includegraphics[width=0.33\textwidth]{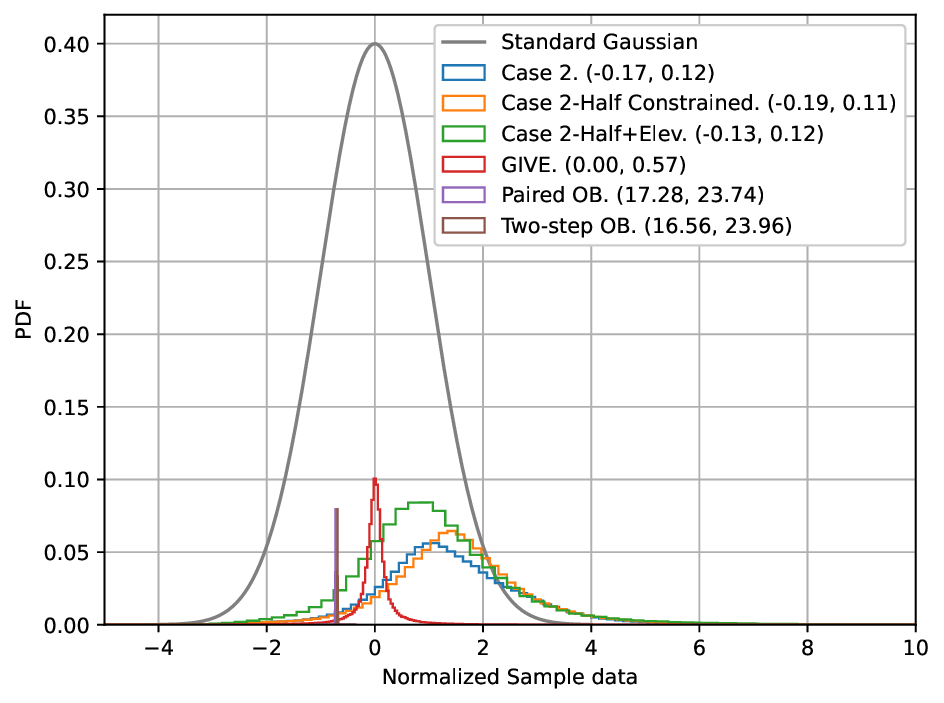}}
    \hfill
    \subfloat[]{\includegraphics[width=0.33\textwidth]{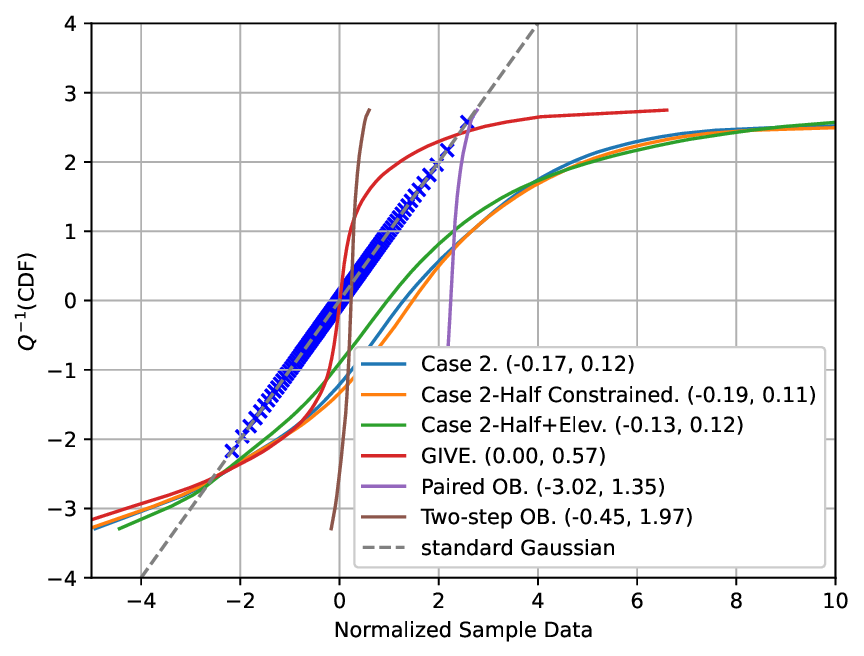}}
    \hfill
    \subfloat[]{\includegraphics[width=0.33\textwidth]{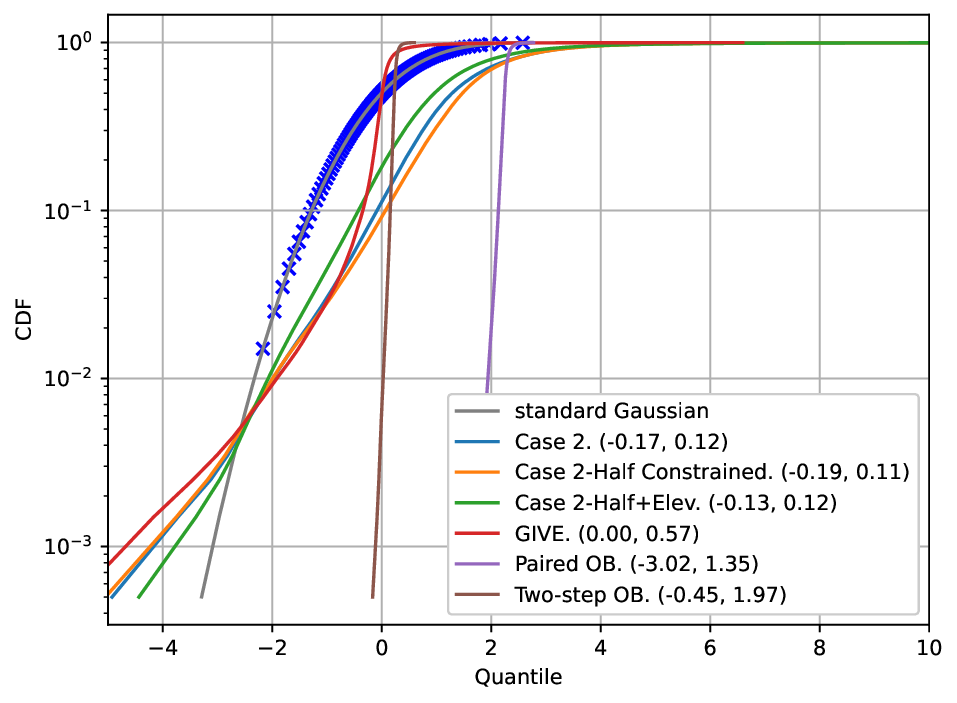}}
    \caption{Visualization of left bounds for ionosphere delay residual errors using various methods: 
    (a) PDF comparison, (b) QQ plot, and (c) CDF (log scale).}
    \label{fig:iono_result}
\end{figure*}

\begin{table*}[!htbp]
    \centering
    \caption{Results on ionosphere delay residual errors}
    \label{tab:iono_result}
    \resizebox{\linewidth}{!}{
\begin{tabular}{l|llllll|lllllll}
\hline
Method                & $\mu_L$ & $\sigma_L$ & OB & $PL_L^1$ & $PL_L^{10}$ & $\mathcal{K}_L$ & $\mu_R$ & $\sigma_R$ & OB & $PL_R^1$ & $PL_R^{10}$ & $\mathcal{K}_R $\\ \hline
Paired OB                                        & -3.020  & 1.350      & True         & -7.193  & -4.343   & 10.518 & 2.580  & 1.197 & True & 6.280 & 3.752 & 10.445 \\ 
Two-step OB                                      & -0.454  & 1.974      & True         & -6.556  & -2.387   & 5.284  & 0.146  & 1.699 & True & 5.396 & 1.809 & 4.519  \\ 
GIVE                                             & 0.000   & 0.573      & True         & -1.772  & -0.562   & 1.288  & 0.000  & 0.573 & True & 1.772 & 0.562 & 1.326  \\ 
Case 2                                       & -0.173  & 0.121      & True         & -0.545  & -0.291   & 2.143  & 0.181 & 0.108 & True & 0.515 & 0.287 & 2.545  \\ 
Case 2-Half Constrained                    & -0.186  & 0.113      & True         & -0.537  & -0.297   & 2.441  & 0.143 & 0.121 & True & 0.518 & 0.262 & 1.808  \\ 
Case 2-Half+Elev & -0.130  & 0.123      & True         & -0.509  & -0.250   & 1.462  & 0.087 & 0.131 & True & 0.491 & 0.215 & 0.988  \\ \hline
\end{tabular}
}
\end{table*}

Table~\ref{tab:iono_result} quantitatively supports these findings. Our three proposed variants substantially improve protection-level metrics. Specifically, the $PL^1$ range (approximately [-0.5, 0.5]) shows improvements exceeding 90\% compared to two-step ([-6.556, 5.396]) and paired overbounding ([-7.193, 6.280]), while the $PL^{10}$ range (approximately [-0.3, 0.3]) reduces by more than 85\% compared to the same baselines. The GIVE approach, despite using conditional distributions, still yields larger protection-level ranges ($PL^1$: [-1.772, 1.772], $PL^{10}$: [-0.562, 0.562]), making our methods approximately 71\% (for $PL^1$) and 47\% (for $PL^{10}$) better. Moreover, average predicted $\sigma$ decreases significantly from around 0.5 (GIVE) to approximately 0.1 in our approaches. Although our methods do not always improve the average overbounding factor $\mathcal{K}$ compared to GIVE, their overall advantages in other critical metrics are evident. Finally, the inclusion of the elevation angle as an additional feature consistently enhances performance across all metrics, indicating that richer feature information enables tighter protection-level estimation while preserving the required conservative constraints.

Overall, these empirical evaluations confirm the practical effectiveness of the proposed method for conditional error distributions in the tested scenarios.

\section{Conclusion\label{sec6}}
This paper studied a safety-oriented uncertainty quantification (UQ) problem: learning conditional predictive uncertainty that is conservative in the tails while remaining as tight as possible to preserve downstream availability. Conventional UQ methods for regression (e.g., Bayesian approximations, ensembles, variance networks, quantile regression, and conformal prediction) primarily target calibration or marginal coverage for a single target variable, and they are not tailored to convolution-based propagation across multiple error sources. In contrast, integrity-style Gaussian overbounding enables convolution-transitive conservatism, but classical constructions are often heuristic, distribution-dependent, and difficult to extend to feature-conditioned settings; moreover, they typically lack an explicit tightness objective.

To bridge these lines, we proposed a single-stage conservative learning framework that outputs a context-conditioned Gaussian overbound and trains it end-to-end under quantile-space overbounding constraints with an excess-mass relaxation. The proposed objective integrates multi-quantile regression with relaxed paired-overbounding constraints, monotonicity regularization, and a Wasserstein-distance-inspired penalty, thereby explicitly controlling the conservatism--tightness trade-off. We further provided a scoped theoretical analysis characterizing finite-grid conservatism, its extension to a certified continuous interval under three explicit regularity assumptions, and local regularity of the objective on compact domains.

Experiments on synthetic mixtures and three real-world error scenarios (troposphere residuals, urban multipath, and ionospheric residuals) demonstrated that embedding overbounding constraints into learning yields consistently tighter conservative bounds than two-step Gaussian overbounding, paired overbounding, and quantile overbounding, while maintaining conservatism on the enforced grid and in the reported empirical evaluations. Across all scenarios, the proposed method reduced system-relevant protection levels while improving distributional proximity (lower Wasserstein discrepancy) and decreasing average overbounding factors. Representative results include substantial reductions in protection-level range for multipath and ionosphere errors (e.g., $>78\%$ and $>85\%$ relative to two-step/paired baselines), together with clear tightness improvements for troposphere residuals. Ablation studies further confirmed that the Wasserstein penalty, quantile reweighting, and the proposed parameterization are critical to achieving a favorable conservatism--tightness balance.

Overall, this work connects quantile-space learning in modern UQ with classical integrity overbounding and provides a principled recipe for learning uncertainty models that support conservative, convolution-based propagation with clearly scoped guarantees. Future work includes extending beyond univariate Gaussians to multivariate bounds, refining certification for more general bounding families, and incorporating temporal structure for sequential uncertainty and change detection.

% \section*{Acknowledgments}
% This work described in this paper was supported by a grant from the Research Grants Council of the Hong Kong Special Administrative Region, China (Project No. 15214523) and the Smart Traffic Fund of the Hong Kong SAR Government (Project No. PSRI/74/2309/PR). The support is gratefully acknowledged. The authors also acknowledge the use of ChatGPT for English proofreading.

\appendices
\numberwithin{equation}{section}
\renewcommand{\theequation}{\thesection\arabic{equation}}

\section{Derivation of the Weighting Scheme for Multiple Quantile Regression \label{app1}}
This appendix gives a heuristic derivation of the weighting scheme used in multiple quantile regression. The goal is to motivate the relative scaling of the weights across quantile levels rather than to establish a unique optimal choice.
\begin{equation}
\label{eq:ap1}
\int_{-\infty}^{a} f(y)\,y\,dy
= aF(a) - \int_{-\infty}^{a} F(y)\,dy .
\end{equation}
\begin{equation}
\label{eq:ap1_Gdef}
G(a) \triangleq \int_{-\infty}^{a} F(u)\,du .
\end{equation}
where $f(y)$ and $F(y)$ denote the \gls{PDF} and \gls{CDF} of the random variable $y$, respectively, and $G(\cdot)$ is defined in \eqref{eq:ap1_Gdef}. Using this identity, we derive the cost function for quantile regression at quantile level $\tau$, denoted as $\mathcal{J}_\tau^{QL}$:
\begin{equation}
\begin{aligned}
\mathcal{J}_\tau^{QL}
= &\int_{-\infty}^{q_\tau} f(y)(1-\tau)(q_\tau-y)\,dy \\
&+ \int_{q_\tau}^{\infty} f(y)\tau(y-q_\tau)\,dy \\
= &(1-\tau)q_\tau F(q_\tau)-\tau q_\tau (1-F(q_\tau)) \\
&- \int_{-\infty}^{q_\tau} f(y)\,y\,dy + \tau\int_{-\infty}^{\infty} f(y)\,y\,dy \\
= &\, q_\tau(F(q_\tau)-\tau)+\tau \mathbb{E}[y]-q_\tau F(q_\tau) +G(q_\tau) \\
= &\,\tau \mathbb{E}[y] -\tau q_\tau +G(q_\tau) .
\end{aligned}
\end{equation}

where $q_\tau$ is the quantile at level $\tau$, and $\mathbb{E}[y]$ denotes the expectation of $y$.
To ensure that each component $w_\tau \mathcal{J}_\tau^{QL}$ is comparable across different values of $\tau$, we assign a weight $w_\tau$ to each $\mathcal{J}_\tau^{QL}$ such that $w_\tau \propto \frac{1}{\mathcal{J}_\tau^{QL}}$. For practical computation, we further approximate $\mathcal{J}_\tau^{QL}$ under the following assumptions:
\begin{itemize}
\item The \gls{CDF} $F(x)$ is approximated by a linear function with slope $\mathrm{slope}$, passing through the point $(q_\tau, \tau)$;
\item The median of $y$ is approximately equal to its expectation, i.e. $\mathbb{E}[y]\simeq F^{-1}(0.5)$.
\end{itemize}
These simplifying assumptions are used only to motivate the dependence of $w_\tau$ on $\tau$.
Under these assumptions, we obtain:
\begin{equation}
\begin{aligned}
\mathcal{J}^{QL}_\tau
&\simeq \tau(\mathbb{E}[y]-q_\tau)+G(q_\tau) \\
&\simeq \tau\left(F^{-1}\left(\frac{1}{2}\right)-q_\tau\right)+\frac{1}{2}\cdot\tau\cdot\frac{\tau}{\mathrm{slope}} \\
&= \tau\left(\frac{1}{2}-\tau\right)\cdot\frac{1}{\mathrm{slope}} + \frac{\tau^2}{2\mathrm{slope}} \\
&= \frac{\tau(1-\tau)}{2\mathrm{slope}} .
\end{aligned}
\end{equation}

This reveals that $\mathcal{J}_\tau^{QL}$ is approximately proportional to $\tau(1-\tau)$. Accordingly, the weighting term can be defined as:
\begin{equation}
w_\tau = \frac{1}{4\tau(1-\tau)}
\end{equation}
where the factor $\frac{1}{4}$ is chosen so that $w_{0.5}=1$. This choice ensures balanced contributions from each quantile level during training, while assigning larger weights to the tails.

\section{Derivation of the Scaling Factor $t$ Condition \label{app2}}
To ensure the conservative overbounding condition in \eqref{eq15} in main text, we require:
\begin{equation}
(1+\epsilon)F_L(\hat{q}_\tau) \geq F(\hat{q}_\tau),\quad \forall \hat{q}_\tau \label{eq:app2_1}
\end{equation}
Under Case 2 and Case 3, the discrete overbounding relation derived in the main manuscript holds:
\begin{equation}
(1+\epsilon)F_L(\hat{q}_\tau)\geq \tau = F(q_\tau) \label{eq:app2_2}
\end{equation}
Therefore, a sufficient condition for \eqref{eq:app2_1} is:
\begin{align}
&F(\hat{q}_\tau) \leq F(q_\tau) =\tau \label{eq:app2_3}\\
\iff\ &\hat{q}_\tau\leq q_\tau \label{eq:app2_4}
\end{align}
That is, ensuring \eqref{eq:app2_3} or \eqref{eq:app2_4} guarantees conservatism.

\subsection{Subgradient Analysis of the Overbounding Loss}
We now derive first-order expressions for the overbounding loss. For the differentiable quantile term $\mathcal{J}^{QL}_\tau$ (and hence $\mathcal{J}^{MQL}$), the Clarke subgradient is a singleton and coincides with the usual gradient.
\par
We start from the integral form of $\mathcal{J}^{QL}_\tau$:
\begin{equation}
\begin{aligned}
\mathcal{J}^{QL}_\tau(\theta)
&=(1-\tau)\int_{-\infty}^{\theta}(\theta-y)\,dF(y)
 +\tau\int_{\theta}^{\infty}(y-\theta)\,dF(y).
\end{aligned}
\end{equation}
The corresponding singleton Clarke subgradient with respect to $\theta$ is
\begin{equation}
\begin{aligned}
\partial_{\theta}\mathcal{J}^{QL}_\tau(\theta)
&=(1-\tau)\int_{-\infty}^{\theta} dF(y)
 - \tau\int_{\theta}^{\infty} dF(y) \\
&=(1-\tau)F(\theta)-\tau(1-F(\theta)) \\
&= F(\theta) - \tau.
\end{aligned}
\end{equation}
For modified quantile loss $\mathcal{J}^{MQL}$ (substituting $t\tau$ for $\tau$), this yields the following singleton Clarke subgradients:
\begin{equation}
    \partial_{\hat{q}_\tau}\mathcal{J}^{MQL} = w_\tau(F(\hat{q}_\tau)-t\tau)
\end{equation}
and the corresponding parameter-wise expressions are:

\begin{align}
\partial_{\mu_\tau} \mathcal{J}^{MQL} &= w_\tau (F(\hat{q}_\tau) - t\tau) \\
\partial_{\sigma_\tau} \mathcal{J}^{MQL} &= w_\tau Q^{-1}\left(k_\tau \tfrac{\tau}{1+\epsilon} \right) (F(\hat{q}_\tau) - t\tau) \\
\partial_{k_\tau} \mathcal{J}^{MQL} &= w_\tau \sigma_\tau \left(F(\hat{q}_\tau) - t\tau \right) \left(Q^{-1}\left(k_\tau \tfrac{\tau}{1+\epsilon} \right)\right)' \tfrac{\tau}{1+\epsilon}
\end{align}

For $\mathcal{J}_{p}$, define
\begin{equation}
    s_\tau := \mu_\tau + Q^{-1}\left(k_\tau \tfrac{\tau}{1+\epsilon}\right)\sigma_\tau
    - \left(\mu + Q^{-1}\left(\tfrac{\tau}{1+\epsilon}\right)\sigma\right),
    \qquad \tau<\tfrac12.
\end{equation}
On the conservative branch enforced by \eqref{eq31}, we have $s_\tau\ge 0$, so $|s_\tau|=s_\tau$. More generally, the expressions below can be interpreted as admissible Clarke subgradients. Let $n_L$ be the number of quantile levels satisfying $\tau<\tfrac{1}{2}$, and let $n_{\min}$ be the number of left-tail quantiles sharing the minimal $\mu$. Then an equal-split subgradient choice for $\mu=\min(\mu_\tau)$ gives:

\begin{align}
\partial_{\mu_\tau} \mathcal{J}_p &=
\begin{cases}
1 - \tfrac{n_L}{n_{\text{min}}} & \text{if } \mu = \mu_\tau \\
1 & \text{otherwise}
\end{cases} \\
\partial_{\sigma_\tau} \mathcal{J}_p &= Q^{-1}\left(k_\tau \tfrac{\tau}{1+\epsilon}\right) - Q^{-1}\left(\tfrac{\tau}{1+\epsilon}\right) \\
\partial_{k_\tau} \mathcal{J}_p &= \sigma_\tau \left(Q^{-1}\left(k_\tau \tfrac{\tau}{1+\epsilon} \right)\right)' \tfrac{\tau}{1+\epsilon}
\end{align}

For $\tau<\tfrac{1}{2}$, a convenient branchwise subgradient selection for the total cost function is given below. Near a stationary solution the monotonicity penalty $\mathcal{J}_m$ is expected to be small, so we omit its contribution here for readability:

\begin{align}
\partial_{\mu_\tau} \mathcal{J} &=
\begin{cases}
w_\tau (F(\hat{q}_\tau) - t\tau) + \lambda \left(1 - \tfrac{n_L}{n_{\text{min}}}\right), & \mu = \mu_\tau, \\
w_\tau (F(\hat{q}_\tau) - t\tau) + \lambda, & \text{otherwise}
\end{cases} \\
\partial_{\sigma_\tau} \mathcal{J} &= w_\tau Q^{-1}\left(k_\tau \tfrac{\tau}{1+\epsilon}\right) (F(\hat{q}_\tau) - t\tau) \\
&\quad + \lambda \left(Q^{-1}\left(k_\tau \tfrac{\tau}{1+\epsilon}\right) - Q^{-1}\left(\tfrac{\tau}{1+\epsilon}\right)\right) \nonumber \\
\partial_{k_\tau} \mathcal{J} &= \left[ w_\tau (F(\hat{q}_\tau) - t\tau) + \lambda \right] \cdot \sigma_\tau \left(Q^{-1}\left(k_\tau \tfrac{\tau}{1+\epsilon} \right)\right)' \tfrac{\tau}{1+\epsilon}
\end{align}

\subsection{Case 2 Derivation for scaling factor $t$}
For Case 2, with shared $\sigma$ and $k$, we compute: 

\begin{align}
\partial_{\sigma} \mathcal{J} &= \sum_{\tau \in \mathcal{T}} w_\tau Q^{-1}\left(k \tfrac{\tau}{1+\epsilon}\right) (F(\hat{q}_\tau) - t\tau) \\
&\quad + \lambda \sum_{\tau < \frac{1}{2}} \left(Q^{-1}\left(k \tfrac{\tau}{1+\epsilon}\right) - Q^{-1}\left(\tfrac{\tau}{1+\epsilon}\right)\right) \nonumber \\
\partial_{k} \mathcal{J} &= \sum_{\tau \in \mathcal{T}} w_\tau \sigma (F(\hat{q}_\tau) - t\tau) \left(Q^{-1}\left(k \tfrac{\tau}{1+\epsilon} \right)\right)' \tfrac{\tau}{1+\epsilon} \\
&\quad + \lambda \sum_{\tau < \frac{1}{2}} \sigma \left(Q^{-1}\left(k \tfrac{\tau}{1+\epsilon} \right)\right)' \tfrac{\tau}{1+\epsilon} \nonumber
\end{align}

Due to shared $\sigma$ and $k$, each $\mu_\tau$ has more degrees of freedom in adjusting the location of its corresponding $q_\tau$. Once $\sigma$ and $k$ are close to a first-order stationary regime, $\mu_\tau$ can self-adjust to optimize the objective. Therefore, it is sufficient to focus on the branchwise stationarity inclusion $0 \in \partial_{\mu_\tau}\mathcal{J}$. Selecting the branchwise subgradient above and setting it to zero, we obtain:

\begin{align}
F(\hat{q}_\tau) &=
\begin{cases}
t\tau - \tfrac{\lambda}{w_\tau}(1 - \tfrac{n_L}{n_{\text{min}}}) & \mu = \mu_\tau \\
t\tau - \tfrac{\lambda}{w_\tau} & \text{otherwise}
\end{cases}
\end{align}

To satisfy the overbounding condition in \eqref{eq:app2_3}, we require:
\begin{align}
    &F(\hat{q}_\tau)\leq\tau \\
    \implies &t\leq 1 -\left(\frac{n_L}{n_{\text{min}}}-1\right) \frac{\lambda}{w_\tau \tau}
\end{align}
The most restrictive case occurs when $n_{\min}=1$, which yields
\begin{align}
    t\leq 1 -(n_L-1) \frac{\lambda}{w_\tau \tau}\label{eq:app2_88}
\end{align}
as the worst-case sufficient bound used in the main text. For example, if using 100 quantiles with $\epsilon=0.0025$ (as adopted in~\cite{blanch2018gaussian}), then $n_L=49$, and the most conservative bound is $t\leq 1-192\lambda$. In practice we use the slightly smaller value $t=1-200\lambda$ as a heuristic finite-grid safety margin rather than as a formal certification statement.

\subsection{Case 3 Derivation for scaling factor $t$}
In Case~3, the Gaussian parameters $\mu$ and $\sigma$ are shared across all quantile levels, while the auxiliary factors $k_\tau$ are optimized independently for each $\tau$.

When $\sigma$ and $\mu$ are fixed, we analyze the implication of the branchwise stationarity inclusion $0 \in \partial_{k_\tau}\mathcal{J}$ on conservatism, using the same conservative branch as above. Since $\sigma>0$ and $\tfrac{\tau}{1+\epsilon}>0$ for $\tau\in(0,1)$,  and recognizing that the derivative of the quantile function satisfies $(Q^{-1}(u))' = 1/\phi(Q^{-1}(u))>0$ for $u\in(0,1)$, where $\phi(z) = \frac{1}{\sqrt{2\pi}}e^{-\frac{z^2}{2}}$ is the standard normal \gls{PDF}, selecting that branchwise subgradient and setting it to zero yields:
\begin{align}
    F(\hat{q}_\tau) = t\tau - \frac{\lambda}{w_\tau}
    \le \tau,
    \label{eq:app2_89}\\
    \implies\quad
    t \le 1 + \frac{\lambda}{w_\tau \tau}.
    \label{eq:app2_90}
\end{align}

Therefore, at the level of first-order stationarity, choosing $t=1$ leads to $F(\hat q_\tau)=\tau-\lambda/w_\tau\le\tau$, and is conservative.

However, in practice, optimizing $k_\tau$ is numerically unstable in extreme quantile regions.
As $u\to 0$ or $u\to 1$, $\phi(Q^{-1}(u))\to 0$ and thus $(Q^{-1}(u))'$ becomes extremely large, which can induce exploding first-order sensitivities.
Meanwhile, $k_\tau$ is parameterized (and effectively bounded) via a sigmoid, whose derivative vanishes in saturated regimes.
The interaction between the exploding sensitivity of $Q^{-1}$ and the saturating sigmoid often prevents the optimizer from reaching the ideal stationary point or pushes solutions to the boundary of $k_\tau$.
As a result, the theoretical relation in \eqref{eq:app2_89} may not hold numerically, and some quantile levels may exhibit $F(\hat q_\tau)>\tau$ (underbounding), violating the desired overbounding property.
Consequently, Case~3 is useful as a theoretical comparison but does not reliably guarantee conservatism in practice and is therefore not adopted as the main method.

\subsection{Numerical Stability for Extreme Tail Probabilities}
During training, extreme quantile levels can make the optimization stiff because $(Q^{-1}(u))' = 1/\phi(Q^{-1}(u))$ grows rapidly as $u\to 0$ or $u\to 1$. For example, $(Q^{-1}(10^{-7}))' \approx 1.86 \times 10^6$, so even modest perturbations in the argument of $Q^{-1}$ can produce very large gradients. This interacts unfavorably with the sigmoid parameterization of $k$ in saturated regimes, which motivates the practical clamping and compact-domain analysis used in the main text.

\section{Continuous-Interval Extension of Finite-Grid Conservatism \label{app3}}
This appendix supplements Section~\ref{sec3} with a discrete-to-continuous extension theorem for the left-tail conservatism constraint.

To make precise what is certified beyond the finite grid constraint in Section~\ref{sec3_2}, we restrict attention to a left-tail certified interval $\mathcal{I}_L:=[\tau_{\min},\tfrac12]$ with $\tau_{\min}>0$, and to the ordered left-tail grid $\mathcal{T}_L:=\{\tau_1<\cdots<\tau_m\}\subset \mathcal{I}_L$. Let
\begin{equation}
    h_{\mathcal{T}}:= \max_{1\le i<m} (\tau_{i+1}-\tau_i)
\end{equation}
denote the maximum grid spacing, and define the quantile-gap function
\begin{equation}
    g(\tau):=F^{-1}(\tau)-F_L^{-1}\!\left(\frac{\tau}{1+\epsilon}\right), \quad \tau\in\mathcal{I}_L.
\end{equation}

\begin{assumption}[Tail regularity]
For each fixed context $x$, the conditional \gls{CDF} $F(\cdot|x)$ is continuous and strictly increasing on the certified left-tail region induced by $\mathcal{I}_L$, so that $F^{-1}(\tau|x)$ is well-defined for $\tau\in\mathcal{I}_L$.
\end{assumption}

\begin{assumption}[Certified interval and grid]
The certified left-tail interval is $\mathcal{I}_L=[\tau_{\min},\tfrac12]$ with $\tau_{\min}>0$, and the enforced quantile grid $\mathcal{T}_L\subset\mathcal{I}_L$ is finite with maximum spacing $h_{\mathcal{T}}$.
\end{assumption}

\begin{assumption}[Quantile regularity]
The actual quantile function $q(\tau):=F^{-1}(\tau)$ and the Gaussian bound quantile function $\bar q(\tau):=F_L^{-1}(\tfrac{\tau}{1+\epsilon})$ are Lipschitz continuous on $\mathcal{I}_L$ with constants $L_q$ and $L_{\bar q}$, respectively.
\end{assumption}

\begin{theorem}[Grid-to-Continuum Conservatism]
\label{thm:grid_to_continuum}
Suppose Assumptions 1--3 hold and define the discrete left-tail margin
\begin{equation}
    m_{\mathcal{T}}:=\min_{\tau_i\in\mathcal{T}_L} g(\tau_i).
\end{equation}
Then for every $\tau\in\mathcal{I}_L$,
\begin{equation}
    g(\tau)\ge m_{\mathcal{T}}-(L_q+L_{\bar q})h_{\mathcal{T}}.
\end{equation}
Consequently, if $m_{\mathcal{T}}>(L_q+L_{\bar q})h_{\mathcal{T}}$, then
\begin{equation}
    F^{-1}(\tau)\ge F_L^{-1}\!\left(\frac{\tau}{1+\epsilon}\right), \quad \forall \tau\in\mathcal{I}_L,
\end{equation}
so the discrete conservatism constraint extends to the entire certified interval. Moreover, if $h_{\mathcal{T}}\to 0$ along a sequence of refined grids and the discrete margin remains bounded away from zero, then
\begin{equation}
    \sup_{\tau\in\mathcal{I}_L}[-g(\tau)]_+\to 0.
\end{equation}
\end{theorem}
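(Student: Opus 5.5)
The plan is to use the Lipschitz structure of the quantile-gap function $g$ directly. By Assumption 3, $g=q-\bar q$ is Lipschitz on $\mathcal{I}_L$: for $\tau,\tau'\in\mathcal{I}_L$,
\begin{equation*}
|g(\tau)-g(\tau')|\le |q(\tau)-q(\tau')|+|\bar q(\tau)-\bar q(\tau')|\le (L_q+L_{\bar q})|\tau-\tau'|.
\end{equation*}
Assumption 1 is used only so that $q(\tau)=F^{-1}(\tau|x)$ is a genuine, single-valued function on $\mathcal{I}_L$, making $g$ well defined. For $\bar q$, note that $\tau/(1+\epsilon)\in(0,\tfrac12)$ is bounded away from $0$ on $\mathcal{I}_L$ because $\tau_{\min}>0$ (Assumption 2). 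Hence $\bar q$ is automatically smooth and Lipschitz there, consistent with Assumption 3.

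The key step is a nearest-grid-point argument. Fix $\tau\in\mathcal{I}_L$ and pick $\tau_i\in\mathcal{T}_L$ with $|\tau-\tau_i|$ minimal. For $\tau\in[\tau_1,\tau_m]$, $\tau$ lies in some cell $[\tau_i,\tau_{i+1}]$, so $|\tau-\tau_i|\le h_{\mathcal{T}}$ (indeed $\le h_{\mathcal{T}}/2$ for the nearer endpoint). The Lipschitz estimate then gives
\begin{equation*}
g(\tau)\ge g(\tau_i)-(L_q+L_{\bar q})|\tau-\tau_i|\ge m_{\mathcal{T}}-(L_q+L_{\bar q})h_{\mathcal{T}},
\end{equation*}
which is the claimed lower bound. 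The second statement follows immediately: if $m_{\mathcal{T}}>(L_q+L_{\bar q})h_{\mathcal{T}}$, then $g(\tau)>0$ on $\mathcal{I}_L$, i.e.\ $F^{-1}(\tau)\ge F_L^{-1}(\tau/(1+\epsilon))$.

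For the refinement statement, suppose $h_{\mathcal{T}}\to0$ with $m_{\mathcal{T}}\ge c>0$ (or merely $m_{\mathcal{T}}\ge0$) along the sequence. The bound gives
\begin{equation*}
\sup_{\tau\in\mathcal{I}_L}[-g(\tau)]_+\le\bigl[(L_q+L_{\bar q})h_{\mathcal{T}}-m_{\mathcal{T}}\bigr]_+,
\end{equation*}
which tends to $0$. In the bounded-away case it is in fact exactly zero once $h_{\mathcal{T}}<c/(L_q+L_{\bar q})$.

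The main obstacle is the boundary behaviour, since $h_{\mathcal{T}}$ as defined only measures interior spacings. Points $\tau\in[\tau_{\min},\tau_1)$ or $(\tau_m,\tfrac12]$ are not controlled by $h_{\mathcal{T}}$. I would handle this by reading Assumption 2 as requiring the grid to reach the endpoints, i.e.\ $\tau_1-\tau_{\min}\le h_{\mathcal{T}}$ and $\tfrac12-\tau_m\le h_{\mathcal{T}}$. Equivalently, one can include the endpoint gaps in the definition of $h_{\mathcal{T}}$, or simply take $\tau_1=\tau_{\min}$ and $\tau_m=\tfrac12$. With that convention, the same one-line Lipschitz estimate covers the end segments, and the argument above is complete. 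I would state this convention explicitly at the start of the proof.
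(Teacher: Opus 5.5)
Your proposal is correct and follows essentially the same argument as the paper: the triangle-inequality Lipschitz bound on $g$ with constant $L_q+L_{\bar q}$, a grid point within distance $h_{\mathcal{T}}$ of each $\tau$, and the bound $\sup_{\tau\in\mathcal{I}_L}[-g(\tau)]_+\le[(L_q+L_{\bar q})h_{\mathcal{T}}-m_{\mathcal{T}}]_+$ for the refinement claim. Your explicit endpoint convention ($\tau_1-\tau_{\min}\le h_{\mathcal{T}}$ and $\tfrac12-\tau_m\le h_{\mathcal{T}}$) is genuinely needed, because the paper's step ``choose a grid point from the same grid cell'' silently assumes it for $\tau$ in $[\tau_{\min},\tau_1)$ or $(\tau_m,\tfrac12]$; your remark that $m_{\mathcal{T}}\ge 0$ already suffices for the convergence statement is also correct.
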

\begin{proof}
For any $\tau\in\mathcal{I}_L$, choose a grid point $\tau_i\in\mathcal{T}_L$ from the same grid cell, so that $|\tau-\tau_i|\le h_{\mathcal{T}}$. By Assumption 3 and the triangle inequality,
\begin{equation}
    |g(\tau)-g(\tau_i)|
    \le |q(\tau)-q(\tau_i)| + |\bar q(\tau)-\bar q(\tau_i)|
    \le (L_q+L_{\bar q})h_{\mathcal{T}}.
\end{equation}
Hence $g(\tau)\ge g(\tau_i)-(L_q+L_{\bar q})h_{\mathcal{T}}\ge m_{\mathcal{T}}-(L_q+L_{\bar q})h_{\mathcal{T}}$. The interval guarantee follows immediately when the right-hand side is positive. The convergence statement follows from the bound
\begin{equation}
    \sup_{\tau\in\mathcal{I}_L}[-g(\tau)]_+\le \bigl[(L_q+L_{\bar q})h_{\mathcal{T}}-m_{\mathcal{T}}\bigr]_+.
\end{equation}
\end{proof}

\section{Compact-Domain Objective Regularity \label{app4}}
This appendix supplements Section~\ref{sec3} with a scoped regularity statement for the Case~2 objective.

\begin{proposition}[Local Regularity of the Objective]
\label{prop:regularity}
Consider Case~2 on a finite quantile grid $\mathcal{T}\subset[\tau_{\min},1-\tau_{\min}]$ with $\tau_{\min}>0$. Assume the optimization is restricted to a compact parameter domain with $\sigma\in[\sigma_{\min},\sigma_{\max}]$ for some $\sigma_{\min}>0$, $s_k\in[s_{\min},s_{\max}]$, and finite $\mu_\tau$. Then the mapping from model outputs to each predicted quantile $\hat q_\tau$ is continuously differentiable, and the full objective \eqref{eq:full_loss} is locally Lipschitz, differentiable almost everywhere, and admits Clarke subgradients.
\end{proposition}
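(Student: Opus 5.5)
The argument is a composition argument: show that every building block of \eqref{eq:full_loss} is either $C^1$ or a piecewise-linear (hence globally Lipschitz) function, then use closure of local Lipschitzness under sums, products and compositions. Almost-everywhere differentiability will follow from Rademacher's theorem. Clarke subgradients exist for any locally Lipschitz function, by Clarke's theorem that the generalized gradient is nonempty, convex and compact.

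\emph{Step 1: the quantile map.} Under Case~2, $\hat q_\tau=\mu_\tau+Q^{-1}\!\left(k\frac{\tau}{1+\epsilon}\right)\sigma$, where $k=1+\epsilon\,\mathrm{sigmoid}(s_k)$ and $\sigma=\exp(\log_\sigma)$ (or $\sigma_{\min}+\exp(\log_\sigma)$). Since $s_k\in[s_{\min},s_{\max}]$, the sigmoid lies in $[a,b]\subset(0,1)$. Hence for every $\tau\in\mathcal{T}\subset[\tau_{\min},1-\tau_{\min}]$ the argument $u=k\tau/(1+\epsilon)$ lies in the compact set
\[
\left[\frac{\tau_{\min}(1+\epsilon a)}{1+\epsilon},\ \frac{(1-\tau_{\min})(1+\epsilon b)}{1+\epsilon}\right].
\]
Because $k\le1+\epsilon$, the upper endpoint is at most $1-\tau_{\min}<1$, and the lower endpoint is positive. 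On this set $(Q^{-1})'(u)=1/\phi(Q^{-1}(u))$ is continuous and bounded. So $(\mu_\tau,\log_\sigma,s_k)\mapsto\hat q_\tau$ is a composition of $C^1$ maps (affine, $\exp$, sigmoid, $Q^{-1}$ on a compact subinterval of $(0,1)$, products) and is therefore $C^1$. Its partial derivatives are continuous on the compact domain, hence bounded, so each $\hat q_\tau$ is Lipschitz there. The same reasoning applies to the reference quantiles $\mu+Q^{-1}(\tau/(1+\epsilon))\sigma$, where $Q^{-1}(\tau/(1+\epsilon))$ is a constant. The one genuinely nonsmooth ingredient here is $\mu=\min_\tau\mu_\tau$, which is a finite minimum of coordinates and hence $1$-Lipschitz.

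\emph{Step 2: the loss terms.} The pinball function $\mathcal{L}_{t\tau}(e)=e(t\tau-\mathbb{I}(e<0))$ is piecewise linear with slopes bounded by $\max(t\tau,1-t\tau)\le 1+|t|$, so it is globally Lipschitz. Each summand $\mathcal{L}_{t\tau}(y_i-\hat q_\tau)$ is then Lipschitz on the compact domain, and the empirical average over $n$ samples, multiplied by the finite weights $w_\tau\le 1/(4\tau_{\min}(1-\tau_{\min}))$, remains Lipschitz. The Wasserstein penalty composes $|\cdot|$ with a difference of $\hat q_\tau$ and a Lipschitz function of $(\mu,\sigma)$. The monotonicity penalty composes $\max(\cdot,0)$ with differences of $\hat q_\tau$'s. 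All of these are Lipschitz on the compact domain, and a finite sum with coefficients $\lambda,\beta$ preserves this. When the network backbone is included, Lipschitzness is only local (compositions with the $C^1$ or locally Lipschitz network outputs), which is exactly the claim.

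\emph{Step 3: conclusion and main obstacle.} Rademacher's theorem then gives differentiability almost everywhere, and Clarke's construction gives nonempty compact convex subgradients. The main obstacle is Step~1: we need the argument of $Q^{-1}$ to stay uniformly away from $0$ and $1$, because otherwise $(Q^{-1})'$ blows up (cf.\ the numerical discussion in Appendix~\ref{app2}). I would handle this by checking explicitly that the bounded $s_k$ and the restriction $\tau\ge\tau_{\min}$ together confine $u$ to the compact subinterval above, and by noting the role of $\sigma_{\min}>0$ in keeping the $\log_\sigma$ reparametrization smooth with bounded Jacobian.
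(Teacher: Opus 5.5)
Your proposal is correct and takes essentially the same route as the paper. Both arguments confine the argument $k\tau/(1+\epsilon)$ of $Q^{-1}$ to a compact subset of $(0,1)$ to get $C^1$ quantile maps, treat the pinball, absolute-value and $\max(\cdot,0)$ terms as piecewise linear and hence Lipschitz, and conclude via closure under sums and compositions, Rademacher's theorem, and nonemptiness of the Clarke generalized gradient; your explicit handling of the shared mean $\mu=\min_\tau\mu_\tau$ as a $1$-Lipschitz nonsmooth ingredient of the Wasserstein penalty is a correct detail the paper leaves implicit.
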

\begin{proof}
Under the stated bounds, the argument $k\tfrac{\tau}{1+\epsilon}$ of $Q^{-1}$ stays inside a compact subset of $(0,1)$ because $\tau$ is bounded away from $0$ and $1$, and $k=1+\epsilon\,\mathrm{sigmoid}(s_k)$ is bounded away from the endpoints of its admissible interval. Therefore $Q^{-1}$ and its derivative are bounded and smooth on the relevant domain, so each $\hat q_\tau$ is a $C^1$ function of the model outputs. The weighted pinball term is piecewise affine in $\hat q_\tau$, the Wasserstein penalty uses absolute values, and the monotonicity term uses $\max(\cdot,0)$; all are locally Lipschitz. Finite sums and compositions of locally Lipschitz functions remain locally Lipschitz, and Rademacher's theorem implies differentiability almost everywhere. Clarke subgradients therefore exist throughout the compact domain.
\end{proof}

Proposition~\ref{prop:regularity} is a regularity statement, not a global optimizer theorem. Its role is to justify first-order stationarity analysis for the compact-domain problem considered in Appendix~\ref{app2}; it does not imply that Adam globally converges for the full nonconvex training problem.

\bibliographystyle{IEEEtran}
\bibliography{References}

\begin{IEEEbiographynophoto}{Ruirui Liu}
    received the B.Eng. degree in Electronic Information Engineering from the Department of Electronic Information Engineering, Huazhong University of Science and Technology, Wuhan, China, in 2016, and the M.Eng. Degree in Electrical and Computer Engineering from the Department of Electrical and Computer Engineering, University of Michigan, in 2018. He has worked for several years in the autonomous driving industry. He is currently working toward the Ph.D. Degree in Aeronautical and Aviation Engineering from The Hong Kong Polytechnic University, Hong Kong. His research interests include the integrity of autonomous driving localization and the application of deep learning techniques to enhance positioning.
\end{IEEEbiographynophoto}
\vspace{11pt}
\begin{IEEEbiographynophoto}{Xuejie Hou}
    received both the B.E. degree and the master's degree in Control Engineering from the Department of Automation Engineering, University of Electronic Science and Technology of China, in 2014 and 2021, respectively. She is currently pursuing a Ph.D. degree at the Hong Kong Polytechnic University. Her research interests encompass tropospheric modeling and integrity monitoring in GNSS.
\end{IEEEbiographynophoto}
\vspace{11pt}
\begin{IEEEbiographynophoto}{Yiping Jiang}
    is an assistant professor at the department of aeronautical and aviation engineering, the Hong Kong polytechnic university. She obtained her Ph.D Degree from the university of new south wales, Sydney, Australia in 2014. Her research interests include precise positioning and integrity monitoring for civil aviation and intelligent transportation systems. Currently, she lectures on avionics system and satellite navigation for undergraduate and postgraduate students.
\end{IEEEbiographynophoto}
\vspace{11pt}
\begin{IEEEbiographynophoto}{Hui Ren}
    received the master's degree in mechanical engineering from the Hong Kong University of Science and Technology, Hong Kong, China, in 2022. He is currently pursuing a Ph.D. degree at The Hong Kong Polytechnic University. His research interests include ionospheric modeling and satellite-based augmentation systems (SBAS) integrity monitoring.

\end{IEEEbiographynophoto}

\vfill

\end{document}